\documentclass{article} %
\usepackage{iclr2025_conference,times}

\usepackage{amsmath,amsfonts,bm,amssymb}

\def\eqref#1{equation~\ref{#1}}

\def\1{\bm{1}}

\def\eps{{\epsilon}}

\def\rc{{\textnormal{c}}}

\def\veta{{\bm{\eta}}}
\def\vtheta{{\bm{\theta}}}
\def\vomega{{\bm{\omega}}}
\def\vxi{{\bm{\xi}}}

\DeclareMathAlphabet{\mathsfit}{\encodingdefault}{\sfdefault}{m}{sl}
\SetMathAlphabet{\mathsfit}{bold}{\encodingdefault}{\sfdefault}{bx}{n}

\def\sA{{\mathbb{A}}}

\def\sC{{\mathbb{C}}}

\def\sO{{\mathbb{O}}}

\def\sR{{\mathbb{R}}}
\def\sS{{\mathbb{S}}}

\def\sX{{\mathbb{X}}}
\def\sY{{\mathbb{Y}}}
\def\sZ{{\mathbb{Z}}}

\newcommand{\E}{\mathbb{E}}

\newcommand{\KL}{D_{\mathrm{KL}}}

\DeclareMathOperator*{\argmin}{arg\,min}

\def\sZao{{\sZ^*_A}}
\def\sZco{{\sZ^*_C}}
\def\Zao{{Z^*_A}}
\def\Zco{{Z^*_C}}
\def\zao{{z^*_A}}
\def\zco{{z^*_C}}
\def\pf{{P}^F}
\def\bpf{{\bar{P}}^F}

\newcommand{\mut}[2]{\text{I}(#1;#2)} 
\newcommand{\comp}[2]{\text{C}(#1;#2)} 
\newcommand{\lvl}{c}

\newcommand{\entropy}{\mathbf{H}}

\DeclareMathSymbol{\minus}{\mathbin}{AMSa}{"39}

\usepackage{hyperref}
\usepackage{url}
\usepackage{todonotes}

\usepackage{enumitem}
\usepackage{tcolorbox}
\usepackage{wrapfig}
\usepackage{floatflt}
\usepackage{subcaption}
\usepackage{makecell}

\usepackage{booktabs}
\usepackage{multirow}
\newcommand\T{\rule{0pt}{2.4ex}}       %
\newcommand\B{\rule[-1.2ex]{0pt}{0pt}} %

\usepackage{amsmath}
\usepackage{amssymb}
\usepackage{mathtools}
\usepackage{amsthm}
\usepackage{mathrsfs}
\usepackage{thmtools, thm-restate}
\usepackage[capitalize,noabbrev]{cleveref}

\declaretheorem[numberwithin=section]{theorem}
\declaretheorem[numberwithin=section]{definition}
\declaretheorem{lemma, remark, corollary}[
style=plain,
sibling=theorem
]

\usepackage{bigfoot}
\DeclareNewFootnote{AAffil}[arabic]
\DeclareNewFootnote{ANote}[fnsymbol]
\usepackage{etoolbox}
\makeatletter
\patchcmd\maketitle{\def\@makefnmark{\rlap{\@textsuperscript{\normalfont\@thefnmark}}}}{}{}{}
\makeatother
\interfootnotelinepenalty=10000

\makeatletter
\def\thanksAAffil#1{%
  \footnotemarkAAffil\protected@xdef\@thanks{\@thanks%
        \protect\footnotetextAAffil[\the \c@footnoteAAffil]{#1}}%
}
\def\thanksANote#1{%
  \footnotemarkANote%
  \protected@xdef\@thanks{\@thanks%
        \protect\footnotetextANote[\the \c@footnoteANote]{#1}}%
}
\makeatother

\title{Studying the Interplay Between the Actor and Critic Representations in Reinforcement Learning}

\author{Samuel Garcin\thanksANote{Equal contribution. Correspondence to \texttt{\{s.garcin,t.mcinroe\}@ed.ac.uk}} \\
University of Edinburgh
\And
Trevor McInroe\footnotemarkANote[1]\\
University of Edinburgh
\And
Pablo Samuel Castro\\
Google DeepMind, Mila
\And
Prakash Panangaden\\
McGill, Mila
\And
Christopher G. Lucas\\
University of Edinburgh
\And
David Abel\\
Google DeepMind, University of Edinburgh
\And
\quad \quad \quad \quad \quad \quad \quad \quad \quad  \ \ \ Stefano V. Albrecht\\
\quad \quad \quad \quad \quad \quad \quad \quad \quad \ \ \ University of Edinburgh
}

\iclrfinalcopy %
\begin{document}
\maketitle
\vspace{-1cm}

\begin{abstract}
Extracting relevant information from a stream of high-dimensional observations is a central challenge for deep reinforcement learning agents. Actor-critic algorithms add further complexity to this challenge, as it is often unclear whether the same information will be relevant to both the actor and the critic. To this end, we here explore the principles that underlie effective representations for the actor and for the critic in on-policy algorithms. We focus our study on understanding whether the actor and critic will benefit from separate, rather than shared, representations. 
Our primary finding is that when separated, the representations for the actor and critic systematically specialise in extracting different types of information from the environment---the actor's representation tends to focus on action-relevant information, while the critic's representation specialises in encoding value and dynamics information. We conduct a rigourous empirical study to understand how different representation learning approaches affect the actor and critic's specialisations and their downstream performance, in terms of sample efficiency and generation capabilities. Finally, we discover that a separated critic plays an important role in exploration and data collection during training. Our code, trained models and data are accessible at \url{https://github.com/francelico/deac-rep}.
\end{abstract}

\section{Introduction}
\begin{wrapfigure}[]{r}{0.34\textwidth}
\vspace{-0.6cm}
    \centering
    \begin{subfigure}[]{0.45\linewidth}
    \includegraphics[width=1\linewidth]{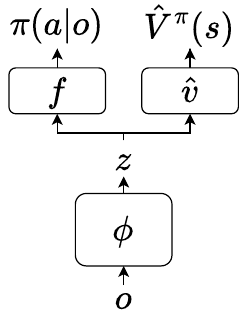}
    \end{subfigure}
    \begin{subfigure}[]{0.45\linewidth}
    \includegraphics[width=1\linewidth]{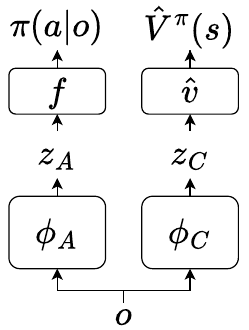}
    \end{subfigure}
    \caption{Models with shared (left) and decoupled representations (right).}\label{fig:1}
\end{wrapfigure}
In recent years, auxiliary representation learning objectives have become increasingly prominent in deep reinforcement learning (RL) agents~\citep{yarats2021image,mhairiCMID}. These objectives facilitate extracting relevant features from high dimensional observations, and can help improve the sample efficiency and generalisation capabilities of both value-based~\citep{anand2019unsupervised,spr} and actor-critic methods~\citep{sac-ae,dbc,hksl}. However, knowing whether a particular representation learning objective will work and understanding \textit{why} it works is often difficult due to the interplay between the different components of modern RL algorithms.

Online actor-critic algorithms like PPO~\citep{PPO} jointly optimise policy improvement and value estimation objectives. When parametrised by deep neural networks, the actor (in charge of improving the policy) and the critic (in charge of estimating the value of the current policy) often share the same learned representation $\phi$, which maps observations to latent features $z$. \citet{PPG} and \citet{raileanu2021decoupling} report that fully separating the actor and critic networks, i.e. \textit{decoupling} the two (\Cref{fig:1}, right), improves sample efficiency and generalisation over a shared architecture (\Cref{fig:1}, left). We hypothesise that decoupling is effective because it encourages information specialisation in $\phi_A$ and $\phi_C$, which in turn improves performance. To test our hypothesis, we introduce metrics to quantify specialisation, and we conduct an extensive empirical study of three on-policy actor-critic algorithms~\citep{PPO,PPG,DCPG} across discrete and continuous control benchmarks, and under various representation learning approaches~\citep{raileanu2021decoupling,DCPG,raileanu2021UCB-DrAC,MICo} applied to the actor, to the critic, or to both.  We supplement this empirical study by a theoretical characterisation of the information extracted by the actor and critic's respective optimal representations.

\begin{table}[!h]
    \centering
        \caption{Once decoupled, the actor and critic representations $\phi_A$ and $\phi_C$ specialise in capturing different information from the environment. Reported values correspond to a PPO agent trained in Procgen \citep{procgen}. See \S\ref{sec:bg} and \S\ref{sec:met} for formal definitions of the quantities quoted.}
    \label{tab:1}
\begin{tabular}{clcc}
\toprule
\multicolumn{1}{l}{\multirow{2}{*}{If ... is high,}} & \multirow{2}{*}{it is possible to ...}                                                                                                                            & \multicolumn{2}{l}{\begin{tabular}[c]{@{}l@{}}\% change from using a \\ shared representation\end{tabular}} \\
\cmidrule(lr){3-4}
\multicolumn{1}{l}{}                                 &                                                                                                                                                                   & \multicolumn{1}{c}{\Large{$\phi_A$}} \B                                  & \Large{$\phi_C$} \B                               \\ \hline
$\mut{Z}{L}$                                          & \begin{tabular}[c]{@{}l@{}}\T overfit to training levels (environment instances).\B\end{tabular} & \multicolumn{1}{c}{\large{-20\%}}                                & \large{+35\%}                                 \\
$\mut{Z}{V}$                                          & use $z$ to predict state values.\B                                                                                                                                  & \multicolumn{1}{c}{\large{+37\%}}                                  & \large{+41\%}                                 \\
$\mut{(Z,Z')}{A}$                                     & \begin{tabular}[c]{@{}l@{}}use $z$ and $z’$ obtained from consecutive timesteps $t$, $t’$\\ to identify the action taken at timestep $t$.\B\end{tabular}            & \multicolumn{1}{c}{\large{+23\%}}                                  & \large{-48\%}                                 \\
$\mut{Z}{Z'}$                                         & \begin{tabular}[c]{@{}l@{}}differentiate between latent pairs obtained from \\ consecutive and non-consecutive timesteps.\end{tabular}                            & \multicolumn{1}{c}{\large{-96\%}}                                  & \large{+324\%} \\
\bottomrule
\end{tabular}
\end{table}

\begin{tcolorbox}[colback=gray!10,
leftrule=0.5mm,top=0mm,bottom=0mm]
Our main findings are summarised below.\begin{itemize}[leftmargin=*]
    \item Decoupled actor and critic representations extract different information about the environment. This information specialisation, described and quantified in \Cref{tab:1}, systematically occurs in the on-policy algorithms and benchmarks covered by our study, and is consistent with the actor's and critic's respective optimal representations.
    \item The actor benefits from representation learning approaches that prioritise extracting level-invariant information over level-specific information. This bias for level-invariant information matters more than what specific information quantity is targeted by the representation learning objective. Nevertheless, approaches antithetical to the actor's inherent information specialisation tend to perform poorly.
    \item Through its role as a baseline in the actor's objective, a decoupled critic will tend to bias policy updates to facilitate the optimisation of its own learning objective. The critic, therefore, plays an important role in exploration and data collection during training. Thus, we find that care must be taken when selecting a representation learning objective for the critic: certain objectives improve the critic's value predictions but may prevent convergence to the optimal policy because the objective induces significant bias.
\end{itemize}
\end{tcolorbox}
\section{Background}\label{sec:bg}
\textbf{RL framework.} We follow the framework established by \citet{kirk2023survey}, which, given a fixed timestep budget, lets us quantify the agent's performance in its original training environment (i.e. its sample efficiency) and its performance in held-out instances (i.e. its generalisation capabilities). We consider the episodic setting, and model the environment as a Contextual-MDP (CMDP) $\mathcal{M} \triangleq ( \sS, \sA, \sO, \mathcal{T}, \Omega, R, \sC, P(\rc), \mathcal{P}_0, \gamma)$ with state, action and observation spaces $\sS$, $\sA$ and $\sO$ and discount factor $\gamma$. In a CMDP, the reward $R:\sS \times \sC \times \sA \rightarrow \sR$, and observation functions $\Omega: \sS \times \sC \rightarrow \sO$ as well as the transition $\mathcal{T} : \sS \times \sC \times \sA \rightarrow \mathscr{P}(\sS)$, and initial state $\mathcal{P}_0: \sC \rightarrow \mathscr{P}(\sS)$ kernels can change with the \textit{context} $c\in\sC$, with $c \sim P(\rc)$ at the start of each episode. The CMDP is therefore conceptually equivalent to an MDP with state space $\sX: \sS \times \sC$. Each context $c$ maps one-to-one to a particular environment instance, or \textit{level}, and thus represents the component of the state $x$ that cannot change during the episode. The agent's policy $\pi : \sO \rightarrow \mathscr{P}(\sA)$ maps observations to action distributions and induce a value function $V^\pi : \sX \rightarrow \sR$ mapping states to expected future returns $V^\pi(x) = \E_\pi[\sum^T_{t}\gamma^{t} r_t]$, where $\{r_t\}_{0:T}$ are possible sequences of rewards obtainable when following policy $\pi$ from $x$ and until the episode terminates. We define the optimal policy $\pi^*$ as the policy maximising expected returns $\E_{c\sim P(\rc), x_0 \sim \mathcal{P}_0(c)}[V^\pi(x_0)]$. During training, we assume access to a limited set of training levels $L\sim P(\rc)$. We evaluate sample efficiency by measuring returns over $L$ and generalisation by evaluating on an held-out set $L_\text{test}\sim P(\rc)$.  

\textbf{Actor-critic architectures.} On-policy actor-critic models consist of a policy network $\pi_{\vtheta_A}$ and a value network $\hat{V}_{\vtheta_C}$, with \textit{actor} parameters ${\vtheta_A}$ and \textit{critic} parameters $\vtheta_C$ (we use $\cdot_A/\cdot_C$ when referring to the actor$/$critic in this work). When learning from high dimensional observations, such as pixels, a representation $\phi:\sO \rightarrow \sZ$ maps observations to latent features $z \in \sZ$. When coupled, the policy and value networks share a representation and split into actor and critic heads $f$ and $\hat{v}$. That is, we have $\pi_{\vtheta_A} \triangleq f_{\vomega} \circ \phi_\veta$ and $\hat{V}_{\vtheta_C} \triangleq \hat{v}_{\vxi} \circ \phi_\veta$, with $\vtheta_A\triangleq (\vomega,\veta)$ and $\vtheta_C \triangleq (\vxi,\veta)$. When decoupled, two representation functions $\phi_A$, $\phi_C$ with parameters $(\veta_A,\veta_C)$ are learned.

\textbf{PPO and PPG.} In this work, we investigate the representation properties of PPO \citep{PPO} and Phasic Policy Gradient (PPG) \citep{PPG}, two actor-critic algorithms that have been reported to benefit from improved sample efficiency and transfer upon decoupling \citep{raileanu2021decoupling, PPG}. In PPO, the actor maximises
\begin{equation}\label{eq:J_PPO}
    J_\pi(\vtheta_A) = \E_{B}\bigg[ \min (\frac{\pi_{\vtheta_A} (a_t|o_t)}{\pi_{{\vtheta_A}_{old}} (a_t|o_t)}\hat{A}_t, \text{clip}(\frac{\pi_{\vtheta_A} (a_t|o_t)}{\pi_{{\vtheta_A}_{old}} (a_t|o_t)}, 1 - \epsilon, 1 + \epsilon)\hat{A}_t) + \beta_H \entropy(\pi_{\vtheta_A}(a_t|o_t))\bigg],
\end{equation}
where ${\vtheta_A}_{old}$ are the actor weights before starting a round of policy updates, $B$ is a batch of trajectories collected with $\pi_{{\vtheta_A}_{old}}$, $\hat{A}_t$ is an estimator for the advantage function at timestep $t$, $\entropy(\cdot)$ denotes the entropy and $\eps$ and $\beta_H$ are hyperparameters controlling clipping and the entropy bonus. The critic minimises
\begin{equation}\label{eq:LV_PPO}
    \ell_V(\vtheta_C) = \frac{1}{|B|} \sum_{o_t \in B} (\hat{V}_{\vtheta_C}(o_t) - \hat{V}_t)^2, 
\end{equation}
where $\hat{V}_t$ are value targets. Both $\hat{A}$ and $\hat{V}$ are computed using GAE \citep{GAE}. PPG performs an auxiliary phase after conducting PPO updates over $N_\pi$ policy phases. To prevent overfitting, the auxiliary phase fine-tunes the critic and distills value information into the representation from much larger trajectory batches $B_{aux} = \bigcup_{i\in 1, \dots, N_\pi} B_i$, using the loss $\ell_\text{joint} = \ell_V + \ell_{aux}$, with
\begin{equation}\label{eq:PPG_Laux}
    \ell_{aux}(\vtheta_A) = \frac{1}{|B_{aux}|}\sum_{(a_t,o_t) \in B_{aux}}(\hat{V}^{aux}_{\vtheta_A}(o_t) - \hat{V}_t)^2 + \beta_{c} \KL ( \pi_{{\vtheta_A}_{old}}(a_t|o_t) \Vert \pi_{{\vtheta_A}}(a_t|o_t)),
\end{equation}
where $\beta_\text{c}$ controls the distortion of the policy. When decoupled, $\hat{V}^{aux}\triangleq v^{aux} \circ \phi_{A} $ distills value information into representation parameters $\veta_A$ through an additional head $v^{aux}$. When coupled, $v^{aux} \equiv \hat{v}$, and a stop-gradient operation on $\ell_V$ ensures $\veta$ is updated by the critic during the auxiliary phase only.

\textbf{Mutual information.} We study the information embedded in features $z$ outputted by $\phi$. To do so, we propose metrics based on the mutual information $\mut{X}{Y}$, measuring the information shared between sets of random variables $X$ and $Y$, defined as
\begin{equation}
    \mut{X}{Y} = \entropy(X) + \entropy(Y) - \entropy(X,Y) = \sum_{\sX} \sum_{\sY} p(x, y) \log \frac{p(x, y)}{p(x)p(y)},
\end{equation}
where integrals replace sums for continuous quantities. $\mut{X}{Y}$ is symmetric, and quantifies how much information about $Y$ is obtained by observing $X$, and vice versa. Similarly, the conditional mutual information $\mut{X}{Y|Z}$ measures the information shared between $X$ and $Y$ that does not depend on $Z$. %
We measure mutual information using the k-nearest neighbors entropy estimator proposed by \citet{kraskov2004estimating} and extended to pairings of continuous and discrete variables by \citet{ross2014mutual}. We briefly introduce notation for random variables used in following sections. $L\sim P(\rc)$ denotes the set of training levels drawn from the CMDP context distribution. $A$, $R$, $O$, $O'$, $X$ and $X'$ are sets constructed from $n$ transitions $( a_t, r_t, o_t, o_{t+1}, x_t, x_{t+1})$ uniformly sampled from a batch of trajectories collected in $L$ using policy $\pi$. $Z$ and $Z'$ are latents features, with $z=\phi(o),z'=\phi(o')$. We construct $V$ using the rewards obtained from $t$ until episode termination, with $v_t =\sum^{T}_{\bar{t}=t}\gamma^{\bar{t} - t} r_{\bar{t}}$. 
\section{Categorising and quantifying the information extracted by learned representations}\label{sec:met}
To conduct our analysis of the respective functions of the actor and critic representations, we analyse the information being extracted from observations at agent convergence. We propose four mutual information metrics to measure information extracted about the identity of the current training level, the value function, and the inverse and forward dynamics of the environment. Each metric relates to quantities relevant to the actor and critic's respective learning objectives, and to the agent's generalisation performance. They are introduced in turn below.

\textbf{Overfitting.} Our first metric, $\mut{Z}{L}$, quantifies overfitting of the actor and critic representations to the set of training levels, as it measures how easy it is to infer the identity of the current level from $Z$. We follow a similar reasoning as \citet{Garcin2024DREDZT} to derive an upper bound for the generalisation error that is proportional to $\mut{Z_A}{L}$.\footnote{Proofs for the theoretical results presented in this work are provided in \Cref{app:proof}.}
\begin{restatable}{theorem}{THGenBound}\label{th:genbound}
The difference in returns achieved in train levels and under the full distribution, or generalisation error, has an upper bound that depends on $\mut{Z_A}{L}$, with
\begin{equation}
    \E_{c\sim \mathcal{U}(L), x_0 \sim \mathcal{P}_0(c)}[V^\pi(x_0)] - \E_{c\sim P(\rc), x_0 \sim \mathcal{P}_0(c)}[V^\pi(x_0)] \leq \sqrt{\frac{2D^2}{|L|} \times \mut{Z_A}{L}},
\end{equation}
where $c\sim \mathcal{U}(L)$ indicates $c$ is sampled uniformly over levels in $L$, $D$ is a constant such that $|V^\pi (x)| \leq D/2,\forall x,\pi$ and $Z_A$ is the output space of the actor's learned representation.
\end{restatable}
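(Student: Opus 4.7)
The plan is to cast this as an instance of an information-theoretic generalization bound in the style of Xu and Raginsky (2017), adapted to the on-policy RL setting as in \citet{Garcin2024DREDZT}. The pivotal structural observation is that the actor's policy factors through its representation, $\pi_{\vtheta_A} = f_{\vomega}\circ \phi_A$, so the per-context expected return $\bar V^\pi(c) \triangleq \E_{x_0 \sim \mathcal{P}_0(c)}[V^\pi(x_0)]$ depends on a context $c$ only through the distribution over $Z_A = \phi_A(O)$ induced when the actor interacts with $c$. This is the hook that lets mutual information with $Z_A$ (as opposed to with the full parameter vector) serve as the capacity-controlling quantity.

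First, I would reinterpret the generalization gap as the difference between two expectations of the bounded functional $\bar V^\pi(\cdot)$: one under the empirical measure $\mathcal{U}(L)$ over the $|L|$ training contexts drawn i.i.d.\ from $P(\rc)$, and the other under $P(\rc)$ itself. Because $|V^\pi(x)| \leq D/2$, we also have $|\bar V^\pi(c)| \leq D/2$, so under $c \sim P(\rc)$ the random variable $\bar V^\pi(c)$ is sub-Gaussian with a variance proxy of order $D^2$.

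Next, I would invoke a Donsker-Varadhan / variational change-of-measure inequality in its standard Xu-Raginsky form, yielding a bound of the shape $\E_{\mathcal{U}(L)}[\bar V^\pi] - \E_{P}[\bar V^\pi] \leq \sqrt{2\sigma^2\, I(W;L)/|L|}$, where $W$ is any random variable through which the policy's dependence on $L$ is mediated. A data-processing argument---exploiting that $\pi$ can distinguish contexts only via the features $\phi_A$ outputs on sampled observations---then lets us replace $W$ by $Z_A$, producing exactly $\mut{Z_A}{L}$ in the bound.

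The main obstacle is making precise what $Z_A$ is as a random variable and setting up its joint law with $L$: $Z_A$ should be taken as the representation evaluated over the full training distribution of observations (across both contexts and trajectories), not at a single fixed timestep, and the data-processing step must pass cleanly from the hypothesis-level mutual information to one involving only the representation's outputs. Once these measure-theoretic details are pinned down and the sub-Gaussian constant is tracked through, taking the square root and rearranging gives the stated bound with constant $2D^2/|L|$.
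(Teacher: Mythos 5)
Your proposal is correct and follows essentially the same route as the paper: the paper imports the bound with $\mut{\pi}{L}$ from \citet{instance_invariant} (itself a sub-Gaussian, Xu--Raginsky-style information-theoretic generalization bound of the kind you sketch) and then, exactly as in your final step, uses the factorisation $\pi \triangleq f \circ \phi_A$ together with the data-processing inequality to conclude $\mut{\pi}{L} \leq \mut{Z_A}{L}$. The only difference is that you re-derive the imported bound rather than cite it, which does not change the structure of the argument.
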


\textbf{Value information.} The second metric quantifies $\mut{Z}{V}$, the mutual information between $Z$ and state values. While a high $\mut{Z_C}{V}$ facilitates the minimisation of $\ell_V$ (\Cref{eq:LV_PPO}), we wish to understand whether increasing $\mut{Z_A}{V}$ is always desirable. In fact, we find an apparent contradiction on this matter in prior work:~\citet{PPG} and~\citet{KaixinPPG} report that value distillation into the actor's representation (which implies a high $\mut{Z_A}{V}$) improves sample efficiency and generalisation of coupled and decoupled PPG agents, whereas~\citet{raileanu2021decoupling} and~\citet{Garcin2024DREDZT} report a positive correlation between generalisation and a high $\ell_V$ for coupled PPO agents (which implies a low $\mut{Z_A}{V}$). %

\textbf{Dynamics in the latent space.} The remaining two metrics investigate the transition dynamics $\mathcal{T}_z : \sZ \times \sA \rightarrow \mathscr{P}(\sZ)$ within the latent state space $\sZ$ spanned by the representation. We will see in later sections that the \textit{reduced MDP} $(\sZ,\sA,\mathcal{T}_z, R_z, \gamma)$ spanned by the actor or critic's representation tend to have distinct $\mathcal{T}_z$, which often markedly differ from the transition dynamics $\mathcal{T}$ in the original environment. $\mut{Z}{Z'}$ measures how easy it is to differentiate a latent pair $( \phi(o),\phi(o'))$ obtained from consecutive observations from a latent pair obtained from non-consecutive observations. $\mut{(Z,Z')}{A}$ quantifies how easy it is to predict the action taken during transition $\langle o, a, o' \rangle$ given the pair $( \phi(o),\phi(o'))$. In \Cref{th:markov_mi}, we establish that $\mathcal{T}_z$ maintains the \textit{Markov property} of the original MDP when both of these metrics attain their theoretical maximum.\footnote{The Markov property is satisfied for a MDP $(\sZ,\sA,\mathcal{T}_z, R_z, \gamma)$ if and only if $\mathcal{T}_z^{(k)}(z_{t+1}|\{a_{t-i},z_{t-i}\}^k_{i=0})=\mathcal{T}_z(z_{t+1}|a_{t},z_{t})$ and $R_z^{(k)}(z_{t+1}|\{a_{t-i},z_{t-i}\}^k_{i=0})=R_z(z_{t+1}|a_{t},z_{t}), \forall a\in\sA, z\in\sZ, k \geq 1.$ 
 }
\begin{restatable}{theorem}{THmarkovmi}\label{th:markov_mi}
 if $\mathcal{T} : \sX \times \sA \rightarrow \mathscr{P}(\sX)$ satisfies the Markov property, and we have $\mut{(X,X')}{A}=\mut{(Z,Z')}{A}$ and $\mut{X}{X'}=\mut{Z}{Z'}$ for any $X,X',A,Z,Z'$ collected using policy $\pi$, then $\mathcal{T}_z : \sZ \times \sA \rightarrow \mathscr{P}(\sZ)$ satisfies the Markov property when following $\pi$. $\mathcal{T}_z$ always satisfies the Markov property if the above conditions hold for any $\pi$.
\end{restatable}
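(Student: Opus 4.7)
My plan is to convert the two mutual-information equalities into conditional independences via the data-processing inequality (DPI), then combine them with the Markov property of $\mathcal{T}$ and a Bayes-rule decomposition of the environment kernel to verify that $P(Z_{t+1} \mid Z_t, A_t, H_{<t}) = P(Z_{t+1} \mid Z_t, A_t)$ under the $\pi$-induced joint, where $H_{<t}$ collects the earlier latents and actions.

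The first step is to apply DPI. Since $(Z, Z')$ is a deterministic function of $(X, X')$ through $\phi\circ\Omega$, $\mut{(X, X')}{A} \geq \mut{(Z, Z')}{A}$ with equality iff $A \perp (X, X') \mid (Z, Z')$, i.e.\ $P(A \mid X, X') = P(A \mid Z, Z')$. Similarly $\mut{X}{X'} \geq \mut{Z}{X'} \geq \mut{Z}{Z'}$, and the hypothesised equality of the extremes forces $X \perp X' \mid Z$, yielding $P(X' \mid X) = P(X' \mid Z)$ and, by summing over $X'$ with $\phi(\Omega(X')) = Z'$, also $P(Z' \mid X) = P(Z' \mid Z)$. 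Marginalising the first identity over $X'$ and using the second then gives $P(A \mid X) = \E_{Z' \mid X}[P(A \mid Z, Z')] = \E_{Z' \mid Z}[P(A \mid Z, Z')] = P(A \mid Z)$, so $A \perp X \mid Z$ as well.

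Next I would decompose the environment kernel via Bayes' rule on the $\pi$-induced joint: $\mathcal{T}(X_{t+1} \mid X_t, A_t) = P(X_{t+1} \mid X_t)\, P(A_t \mid X_t, X_{t+1}) / P(A_t \mid X_t)$. Substituting the three conditional independences from the previous step and summing over $X_{t+1}$ with $\phi(\Omega(X_{t+1})) = Z_{t+1}$ gives
\[
P(Z_{t+1} \mid X_t, A_t) \;=\; P(Z_{t+1} \mid Z_t)\,\frac{P(A_t \mid Z_t, Z_{t+1})}{P(A_t \mid Z_t)},
\]
a function of $(Z_t, A_t, Z_{t+1})$ only. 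The Markov property of $\mathcal{T}$ gives $P(Z_{t+1} \mid X_t, A_t, H_{<t}) = P(Z_{t+1} \mid X_t, A_t)$, and since the right-hand side above does not depend on $X_t$ once $Z_t$ is fixed, taking the expectation over $X_t \mid (Z_t, A_t, H_{<t})$ or over $X_t \mid (Z_t, A_t)$ yields the same quantity, establishing Markov-ness of $\mathcal{T}_z$ under $\pi$. For the second claim I would specialise the hypothesis to deterministic policies $\pi \equiv a_0$: the condition involving $A$ becomes vacuous (both sides are zero) and the other condition reduces to $\mathcal{T}(\cdot \mid X_t, a_0)$ depending on $X_t$ only through $Z_t$; letting $a_0$ range over $\sA$ yields a bisimulation-style property that makes $\mathcal{T}_z(z' \mid z, a)$ well-defined independently of the policy, whence the latent chain is Markov under every $\pi$.

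The main difficulty I anticipate is bridging the independence $X \perp X' \mid Z$---a statement about the marginal joint of $(X, X')$ over $A$ under $\pi$---to the action-conditional statement required for Markov-ness of $\mathcal{T}_z$; the Bayes-rule decomposition combined with the action-independence condition is precisely what closes this gap, and some care is needed so that the conditional densities employed are well-defined on the support of the $\pi$-induced joint.
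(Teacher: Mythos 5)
Your proof is correct, but it takes a genuinely different route from the paper's. The paper does not establish latent Markovianity from scratch: it invokes the theorem of \citet{markov_z} (the Inverse Model and Density Ratio conditions imply a Markov representation) and devotes its proof to showing that $\mut{(Z,Z')}{A}=\mut{(X,X')}{A}$ holds if and only if the Inverse Model condition holds, and that $\mut{Z}{Z'}=\mut{X}{X'}$ implies the Density Ratio condition, in both cases by rewriting the information gap as an expected KL divergence and applying Gibbs' inequality. You instead extract the conditional independences directly from the equality case of the data-processing inequality ($P^\pi(a|x,x')=P^\pi(a|z,z')$, $P^\pi(x'|x)=P^\pi(x'|z)$, hence $P^\pi(a|x)=P^\pi(a|z)$) and then close the argument yourself with the Bayes-rule factorisation $P^\pi(z'|x,a)=P^\pi(z'|z)\,P^\pi(a|z,z')/P^\pi(a|z)$, showing the next-latent distribution depends on $x$ only through $z$, which together with the Markov property of $\mathcal{T}$ gives Markovianity of $\mathcal{T}_z$ under $\pi$ without appealing to the cited theorem. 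Your route is self-contained and, for the density-ratio part, arguably cleaner: the two-step DPI chain $\mut{X}{X'}\geq\mut{Z}{X'}\geq\mut{Z}{Z'}$ replaces the paper's manipulation of ratios $P^\pi(x'|x)/P^\pi(x')$ inside a ``KL'' between unnormalised quantities. What the paper's route buys is the second claim essentially for free, since once both conditions hold for every policy, \citet{markov_z} directly yields a policy-independent Markov abstraction. Your sketch for that claim via constant policies $\pi\equiv a_0$ has a small caveat: the resulting bisimulation-style property is only established on states visited under that constant policy, which need not cover states reachable under other policies; it is easily repaired by applying your main Bayes-rule display to each policy separately (or to a full-support stochastic policy), since it already yields $P^\pi(z'|x,a)$ as a function of $(z,a,z')$ on the corresponding support. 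Finally, note that all conditionals here are taken with respect to the $\pi$-induced occupancy over sampled transitions, which matches the paper's own level of rigour regarding supports.
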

Given that $\phi$ only induces $\mathcal{T}_z$ for the current $\pi$ in the on-policy setting, we make the distinction between $\mathcal{T}_z$ being Markov when following $\pi$ and the more general notion of $\mathcal{T}_z$ being Markov when following any policy. Crucially, \Cref{th:markov_mi} generalises the equivalence relations obtained by \citet{markov_z} to continuous metrics. As such, $\mut{(Z,Z')}{A}$ and $\mut{Z}{Z'}$ quantify how close \textit{any} $\phi$ comes to have $\mathcal{T}_z$ satisfy the Markov property. They remain applicable in settings in which it isn't practical (or even possible) for $\mathcal{T}_z$ to satisfy the Markov property, e.g. when $\phi$ has finite capacity and bottlenecks how much information can be extracted from raw observations, or when observations are not Markov.\footnote{In contrast, \citet{markov_z} assume observations are always Markov.}  
\section{Information specialisation in actor and critic representations}\label{sec:specialization-theory}
\citet{raileanu2021decoupling, PPG} have attributed the performance improvements obtained from decoupled architectures to the disappearance of gradient interference between the actor and critic, and to the critic tolerating a higher degree of sample reuse than the actor before overfitting. We propose a different interpretation: given their different learning objectives, the actor's and critic's \textit{optimal representations} (defined below) prioritise different types of information from the environment.
While not incompatible with prior interpretations, our claim is stronger. We posit that an optimal (or near-optimal) representation for both the actor and critic will generally be impossible under a shared architecture.
\begin{definition}
Given the model $m \triangleq f_{\vomega} \circ \phi$ and associated loss $\ell_m (\vomega,\phi)$, an optimal representation $\phi^* : \sO \rightarrow \sZ^*$ satisfies the conditions:
\begin{enumerate}
    \item \textbf{Optimality conservation.} $\min_{\vomega} \ell_m(\vomega, \phi^*) = \min_{\vomega, \phi} \ell_m(\vomega, \phi)$
    \item \textbf{Maximal compression.} $\phi^* \in \argmin_{\tilde{\Phi}} |\sZ^*|$, with $\tilde{\Phi}$ the set of all $\phi$ satisfying condition 1.
\end{enumerate}
\end{definition}

\begin{figure}
\includegraphics[width=1\linewidth]{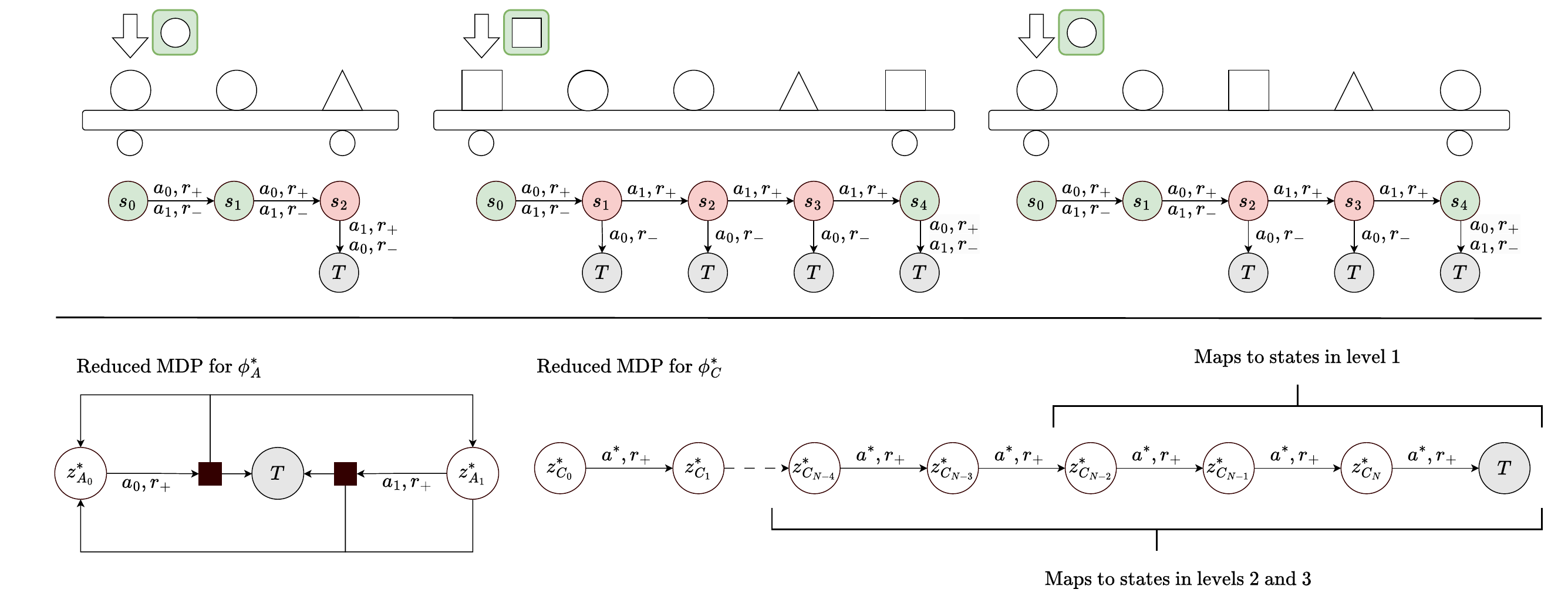}
    \caption{(Top) the initial observations and state spaces of three levels from the assembly line environment in \S\ref{sec:specialization-theory}. (Bottom) the reduced MDPs spanned by $\phi^*_A$ and $\phi^*_C$.}\label{fig:toy_ex}
\end{figure}
We will provide theoretical insights on the respective specialisations and mutual incompatibility of $\phi^*_A$ and $\phi^*_C$, which we further highlight through a motivating example. In our example, depicted in \Cref{fig:toy_ex}, the agent inspects parts for defects on an assembly line. The agent is trained on a set $L$ of levels drawn from $P(\rc)$. A level is characterised by a particular combination of part specifications, number and ordering, each part having a probability $\pf$ of being defective. At each timestep, the agent observes the part specifications for the current level, which parts are on the assembly line and which part is up for inspection. The agent picks action $a \in \sA = \{a_0=\textit{accept}, a_1=\textit{reject}\}$ and moves to the next part. It receives a reward $R=r_+$ when correctly accepting/rejecting a good/defective part and $R=r_\minus$ when it makes a mistake, with $r_+>r_\minus$. The episode terminates early when the agent accepts a defective part, otherwise it terminates after $N_c$ timesteps, where $N_c$ is the number of parts in level $c\in L$.
\subsection{The actor's optimal representation}
The combinatorial explosion of possible specifications and part assortments means $\phi^*_A$ should ideally map observations to a \textit{reduced MDP} spanning a much smaller state space that in the original environment. However, $\phi^*_A$ should still provide the information necessary to select the optimal action at each timestep of each level, including those not in the training set.

\textbf{Dynamics of the reduced MDP.} Under our definition, the mapping
\begin{equation}
    \phi^*_A(o) = 
    \begin{cases}
      \zao_0, & \text{if}\ a^*=a_0 \\
      \zao_1, & \text{if}\ a^*=a_1.
    \end{cases}
\end{equation}
satisfies the conditions for being an optimal representation, and spans the reduced MDP in \Cref{fig:toy_ex} (bottom left). This reduced MDP describes the perceived environment dynamics when only observing the latent states in $\sZao$. By construction, $\mut{(\Zao, \Zao')}{A}$ is guaranteed to be maximised when following the optimal policy. We note that, even if $\mut{X}{X'}$ is maximised under $\pi^*$ in the original environment, we have $\mut{\Zao}{\Zao'}=0$ in the reduced MDP. In other words, the current reduced state yields no information about the next reduced state, and vice versa.

\textbf{Overfitting to training levels.} In our assembly line example, an \textit{overfit} $\phi_A$ with high $\mut{Z_A}{L}$ may leverage this information to first identify, and then solve certain levels. This overfit $\phi_A$ may be optimal over $L$, but would be heavily biased to the training set, and fail to generalise to unseen levels that do not satisfy certain spurious correlations. For example: ``\textit{if there are three objects on the line then I must be in level 1, and, in level 1, I should reject the triangle}''. In contrast, a representation that is \textit{invariant} to individual levels, i.e. with $\mut{Z_A}{L}=0$, guarantees zero generalisation error under \Cref{th:genbound}. Nevertheless, achieving $\mut{Z_A}{L}=0$ is not achievable under certain conditions, stated in \Cref{lem:miZL_cond}.
\begin{restatable}{lemma}{LEMmiZLcond}\label{lem:miZL_cond}
    $\mut{Z}{L} > 0$ if $\mut{O}{L} > 0$ and $\exists z_k,\lvl_j \in Z\times L$ such that $\mu(z_k|\lvl_j) \neq \mu(z_k)$.
\end{restatable}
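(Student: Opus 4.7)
The plan is to argue directly from the standard information-theoretic characterization of independence. I would begin by invoking the identity $\mut{Z}{L} = \KL(\mu_{Z,L} \Vert \mu_Z \otimes \mu_L) \geq 0$, with equality if and only if $\mu_{Z,L} = \mu_Z \otimes \mu_L$ almost everywhere, or equivalently $\mu(z|l) = \mu(z)$ whenever $\mu(l) > 0$. The second hypothesis of the lemma states that there exist $z_k, \lvl_j$ with $\mu(z_k | \lvl_j) \neq \mu(z_k)$, which is precisely a violation of this equality condition. Consequently, $Z$ and $L$ cannot be independent, and $\mut{Z}{L} > 0$ follows from the strict positivity of the KL divergence in the non-independence case.

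The role of the first hypothesis $\mut{O}{L} > 0$ is to ensure consistency rather than to enter the core argument. Since $Z = \phi(O)$ is a deterministic function of $O$, the data processing inequality yields $\mut{Z}{L} \leq \mut{O}{L}$. Were $\mut{O}{L} = 0$, then $O \perp L$ and a fortiori $Z \perp L$, forcing $\mu(z|l) = \mu(z)$ for every $(z,l)$ and thus contradicting (ii). I would therefore present (i) as a necessary prerequisite for (ii) to even be consistent in the presence of a function $\phi$, and note that stating both clarifies the combined sufficient condition for $\phi$ to propagate level-dependence from $O$ to $Z$.

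The only step that requires care, and the one I would flag as the main (minor) obstacle, is the measure-theoretic reading of the hypothesis when $Z$ is continuous. The inequality $\mu(z_k | \lvl_j) \neq \mu(z_k)$ must be interpreted as the conditional and marginal laws differing on a set of positive product measure, not merely at an isolated point. Under this natural reading, the integrand of the KL divergence is nonzero on a set of positive measure, and Gibbs' inequality gives strict positivity of $\KL(\mu_{Z,L}\Vert \mu_Z\otimes\mu_L)$. No limiting or approximation argument is needed, and the proof closes immediately.
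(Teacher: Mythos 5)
Your proposal is correct and follows essentially the same route as the paper: both arguments reduce to the fact that $\mut{Z}{L}=0$ if and only if $Z$ and $L$ are independent (you via the KL form and Gibbs' inequality, the paper via the entropy identity $\mut{Z}{L}=\entropy(Z)+\entropy(L)-\entropy(Z,L)$), with the hypothesis $\mu(z_k|\lvl_j)\neq\mu(z_k)$ ruling out independence, and both invoke the data processing inequality on the chain $X \rightarrow O \rightarrow Z$ to explain the role of $\mut{O}{L}>0$. Your extra remark that, for continuous $Z$, the inequality of conditional and marginal laws must hold on a set of positive measure is a sensible refinement the paper leaves implicit, but it does not change the argument.
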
 In our example, $\mut{O}{L} > 0$, since a level can be identified from its observations. We can verify that the second condition holds for $\phi^*_A$ by inspecting the stationary distributions in a particular level $c$ and over all levels,
\begin{equation}
    \mu(z) = \begin{cases}
      \bpf, & \text{if}\ z=\zao_0 \\
      \pf, & \text{if}\  z=\zao_1
    \end{cases}
\quad
    \mu(z|c) = \begin{cases}
      \bpf_c, & \text{if}\ z=\zao_0 \\
      \pf_c, & \text{if}\  z=\zao_1,
    \end{cases}
\end{equation}
where $\bpf=1-\pf$ and $\pf_c$ is the defect probability when in level $c$. We may have $\pf \neq \pf_c$, since individual levels do not all have the same distribution of defective parts. In other words, while it is useful to reduce $\mut{Z_A}{L}$ to guard against overfitting, the optimal representation $\phi^*_A$ may carry some irreducible information about level identities.  %

\textbf{Value and dynamics distillation can induce overfitting.} we can employ the chain rule of mutual information to decompose the information $\phi$ captures about some arbitrary quantity $Y$ as
\begin{equation}
    \mut{Z}{Y} = \mut{Z}{Y|L} + \mut{Z}{L} - \mut{Z}{L|Y},
\end{equation}
where $\mut{Z}{Y|L}$ is the \textit{level-invariant} information encoded about $Y$, and $\mut{Z}{L} - \mut{Z}{L|Y}$ is the \textit{level-specific} information being encoded about $Y$. Thus, encouraging $\phi_A$ to capture extraneous information about state values or transition dynamics can cause $\mut{Z_A}{L}$ to increase and promote overfitting. In \Cref{lem:miZZ_and_ZL}, we show that increasing $\mut{Z_A}{V}$ or $\mut{Z_A}{Z'_A}$ (which is not necessary for obtaining an optimal $\phi^*_A$, as discussed above) can cause $\mut{Z_A}{L}$ to increase as well.
\begin{restatable}{lemma}{LEMmiZZandZL}\label{lem:miZZ_and_ZL}
       $\mut{Z}{L}$ monotonically increases with a) $\mut{Z}{V}-\mut{Z}{V|L}$ and b) $\mut{Z}{Z'}-\mut{Z}{Z'|L}$.
\end{restatable}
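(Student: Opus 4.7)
The plan is to derive, for an arbitrary random variable $Y$, the identity that links the ``level-specific information about $Y$'' quantity $\mut{Z}{Y}-\mut{Z}{Y\mid L}$ to $\mut{Z}{L}$ itself, and then specialise to $Y=V$ for part (a) and $Y=Z'$ for part (b). The key tool is the chain rule of mutual information, which is already implicitly used in the decomposition stated just before the lemma.

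First, I would expand the joint mutual information $\mut{Z}{(Y,L)}$ in the two standard ways given by the chain rule:
\begin{equation}
\mut{Z}{(Y,L)} \;=\; \mut{Z}{L} + \mut{Z}{Y\mid L} \;=\; \mut{Z}{Y} + \mut{Z}{L\mid Y}.
\end{equation}
Equating the two expansions and rearranging yields the symmetric ``interaction information'' identity
\begin{equation}
\mut{Z}{L} - \mut{Z}{L\mid Y} \;=\; \mut{Z}{Y} - \mut{Z}{Y\mid L}.
\end{equation}
Next, I would invoke the non-negativity of conditional mutual information, $\mut{Z}{L\mid Y}\geq 0$, to conclude
\begin{equation}
\mut{Z}{L} \;\geq\; \mut{Z}{Y} - \mut{Z}{Y\mid L},
\end{equation}
which shows that the level-specific information about $Y$ encoded in $Z$ is a lower bound on $\mut{Z}{L}$. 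Hence any increase of the right-hand side forces $\mut{Z}{L}$ to be at least as large; this is precisely the sense in which $\mut{Z}{L}$ monotonically increases with the level-specific information. Substituting $Y=V$ gives (a) and $Y=Z'$ gives (b).

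The argument is short and mostly bookkeeping with information-theoretic identities, so I do not expect a genuine technical obstacle. The one delicate point is interpretational rather than mathematical: the phrase ``monotonically increases with'' is most naturally read as the lower-bound statement above (increasing the level-specific quantity necessarily drags $\mut{Z}{L}$ up with it), as opposed to a stronger claim of strict monotonicity for every perturbation of $\phi$; I would make this reading explicit when stating the conclusion, since two representations with the same level-specific information can still differ in $\mut{Z}{L}$ through the slack term $\mut{Z}{L\mid Y}$.
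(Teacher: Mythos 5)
Your proposal is correct and follows essentially the same route as the paper: both rest on the chain-rule expansion of $\mut{Z}{(Y,L)}$, giving $\mut{Z}{L} = \mut{Z}{Y} - \mut{Z}{Y|L} + \mut{Z}{L|Y}$, specialised to $Y=V$ and $Y=Z'$. Your extra step of invoking $\mut{Z}{L|Y}\geq 0$ to phrase the conclusion as the explicit lower bound $\mut{Z}{L} \geq \mut{Z}{Y}-\mut{Z}{Y|L}$ is simply a more careful reading of ``monotonically increases with'' than the paper's informal statement, not a different argument.
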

In fact, $\mut{Z}{L}$ will increase when $\mut{Z}{V}>\mut{Z}{V|L}$ (or when $\mut{Z}{Z'}>\mut{Z}{Z'|L}$), that is when the information encoded carries a level-specific component. In our example, the higher state values only occur in a subset of levels, since the optimal value for any given state depends on how many parts are left to inspect. State values therefore inherently contain level-specific information, and may induce overfitting if this information is captured by $\phi_A$. This challenges the notion that value distillation would systematically improve the actor's representation.

The key implications of the above are 1) While achieving zero $\mut{Z_A}{L}$ is not always possible, a high $\mut{Z_A}{L}$ implies $\phi_A$ is overfit; 2) Increasing $\mut{Z_A}{Z_A'}$ or $\mut{Z_A}{V}$ is not necessary for obtaining $\phi^*_A$, and will increase $\mut{Z_A}{L}$ if the information captured is level-specific.
\subsection{The critic's optimal representation}\label{sec:optimal-c}
\textbf{$\phi^*_C$ has higher $\mut{Z}{L}$ than $\phi^*_A$.} The reduced MDP spanned by $\phi^*_C$ is depicted in \Cref{fig:toy_ex} (bottom right). In order to ensure perfect value prediction, $\phi^*_C$ maps each possible optimal state value to a different element in $\sZco$, and it maximises $\mut{\Zco}{V}$ by construction. $\mut{\Zco}{\Zco'}$ is also high due to the recurrence $V^\pi(x) = \E_{a\sim\pi} [R(x,a) + \gamma \E_{x'\sim P^\pi(x'|x)} [V^\pi(x')]]$. This points to $V^\pi$ being a quantity inherently more level specific than the optimal action for the current state, because $V^\pi$ encodes information pertaining to all possible future states of the current level. We should then expect that, in general, $\mut{\Zco}{L} > \mut{\Zao}{L}$, implying that decoupling the actor and critic helps with $\mut{Z_A}{L}$ regularisation.%

\textbf{$\phi^*_C$ is not compatible with $\pi^*$.} Paradoxically, while $\phi^*_C$ would necessitate trajectories collected using the optimal policy in order to be learnt, in our example it is not possible to have an optimal policy that only depends on $\zco$. The information contained in $\zco$ is not sufficient for picking the optimal action in any given timestep, and therefore the best response is to always pick $a_1$ in order to prevent early termination. Therefore, in addition to the information prioritised by $\phi^*_C$ being in general irrelevant to $\pi^*$, employing a shared $\phi$ with a finite capacity may prevent extracting the necessary information to execute the optimal policy.
\begin{figure}[!t]
    \begin{center}
    \begin{subfigure}[]{1\linewidth}
    \includegraphics[width=1\linewidth]{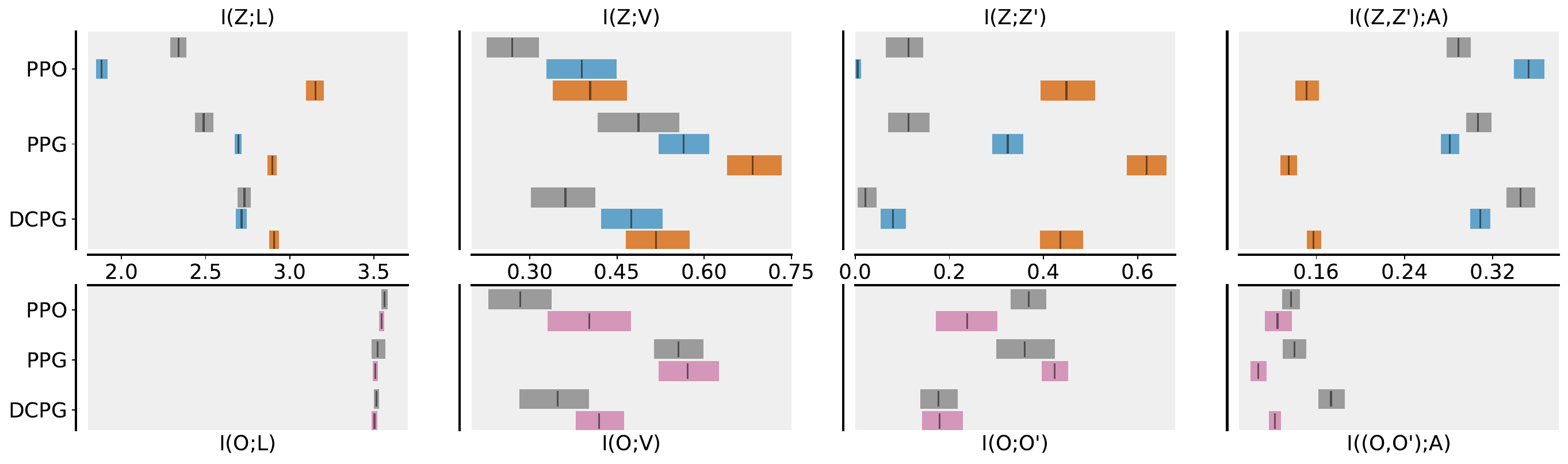}
    \end{subfigure}
    \end{center}
    \caption{Mean and 95\% confidence interval aggregates of $I(Z;\cdot)$/$I(O;\cdot)$ (top/bottom rows) in Procgen. Gray bars indicate $I(Z;\cdot)$/$I(O;\cdot)$ for a shared $\phi$.  Blue and orange bars indicate $I(Z;\cdot)$ measured for $\phi_A$ and $\phi_C$ when employing a decoupled architecture. Pink bars indicate $I(O;\cdot)$ measured when using a decoupled architecture. X-axes are shared across top and bottom. For all algorithms, decoupling induces specialisation consistent with \S\ref{sec:specialization-theory}.}
    \label{fig:shdec-spec-obs-comp}
\end{figure}
\subsection{Confirming specialisation empirically}
\begin{wrapfigure}[]{r}{0.34\textwidth}
    \centering
    \vspace{-2.cm}
    \includegraphics[width=1\linewidth]{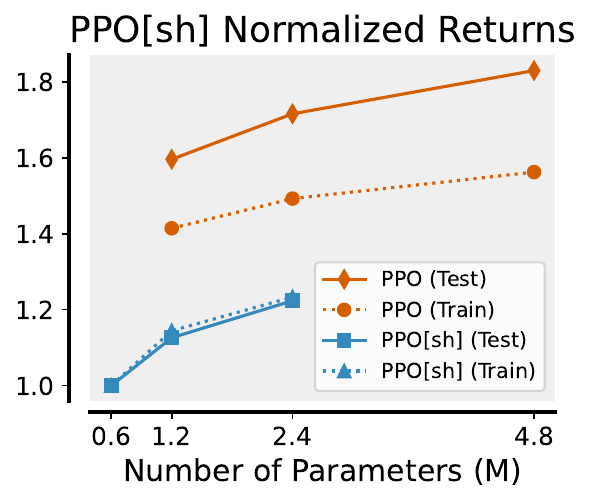}
    \caption{Effect of parameter scaling in coupled (blue) and decoupled (orange) PPO. Scores normalized by model performance at 0.6M parameters.}\label{fig:param-efficiency}
    \vspace{-0.6cm}
\end{wrapfigure}
We conclude this section by studying the representations learned by PPO \citep{PPO}, PPG \citep{PPG} and DCPG \citep{DCPG}, a close variant of PPG that employs delayed value targets to train the critic and for value distillation. We evaluate all algorithms with and without decoupling their representation. We conduct our experiments in Procgen \citep{procgen}, a benchmark of 16 games designed to measure generalisation in RL. We report our main observations in below, with extended results and details on our methodology included in \Cref{app:results_procgen}.

\textbf{Specialisation is consistent with $\phi^*_A$ and $\phi^*_C$.} As no algorithm achieves optimal scores in all games, we now consider the suboptimal representations $\phi_A$ and $\phi_C$ realistically obtainable by the end of training. In \Cref{fig:shdec-spec-obs-comp}, we observe clear specialization upon decoupling consistent with the properties we expect for $\phi^*_A$ and $\phi^*_C$. $\phi_C$ has high $\mut{Z}{V}$, $\mut{Z}{Z'}$ and $\mut{Z}{L}$, while $\phi_A$ specializes in $\mut{(Z,Z')}{A}$ .

\textbf{Decoupling is more parameter efficient.} Since decoupled representations fit twice as many parameters, it is fair to wonder whether the performance improvements are mainly caused by the increased model capacity. To test this, we measure performance as we scale model size in a shared and a decoupled architecture in \Cref{fig:param-efficiency}. Surprisingly, the decoupled model turns out to be \textit{more} parameter efficient, and still outperforms a shared model with four times its parameter count.

\textbf{On Markov representations.}
According to \Cref{th:markov_mi}, a representation is considered Markov when both $\mut{(Z,Z')}{A}$ and $\mut{Z}{Z'}$ are maximized. However, our observations reveal an interesting pattern during decoupling: the actor representation shows an increase in $\mut{(Z_A,Z_A')}{A}$ but a decrease in $\mut{Z_A}{Z_A'}$, while the critic representation shows the opposite effect - a decrease in $\mut{(Z_C,Z_C')}{A}$ and an increase in $\mut{Z_C}{Z_C'}$. This divergent behavior suggests that neither the actor nor the critic networks inherently benefit from maintaining a Markov representation. This finding aligns with theoretical expectations, as neither $\phi_A$ nor $\phi_C$ need be Markov to be optimal. Furthermore, we found no significant correlation between the sum of these mutual information terms ($\mut{(Z,Z')}{A} + \mut{Z}{Z'}$) and agent performance, as shown by comparing \Cref{fig:mi_sum_markov} and \Cref{fig:complete-returns}.
\section{Representation learning for the actor}
In this section, we study how different representation learning objectives affect $\phi_A$ in PPO, PPG and DCPG. We consider advantage \citep{raileanu2021decoupling} and dynamics \citep{DCPG} prediction, data augmentation \citep{raileanu2021UCB-DrAC} and MICo \citep{MICo}, an objective explicitly shaping the latent space to embed differences in state values. We study these objectives in Procgen (\Cref{fig:rep_actor_procgen}), and in four continuous control environments with video distractors \citep{mcinroe2025pixelbrax}(\Cref{fig:brax_all_mi}). 

\textbf{Representation learning impacts information specialisation.} As expected, applying auxiliary tasks alters what information is extracted by the representation. Dynamics prediction generally enhances the specialization of $\phi_A$ by increasing $\mut{(Z,Z')}{A}$ while reducing the three other quantities. Conversely, MICo produces the opposite effect - in most cases, it increases $\mut{Z}{Z'}$, $\mut{Z}{V}$, and $\mut{Z}{L}$ at the expense of $\mut{(Z,Z')}{A}$. The effects of the last two objectives are not as clear-cut. Data augmentation produces little change in each quantity, while advantage prediction tends to reduce the measured mutual information, but is inconsistent in the quantities it affects. Performance-wise, data augmentation improves train and test scores for all algorithms; dynamics prediction tends to improve performance for PPG and DCPG; MICo generally decreases performance, and advantage prediction makes no noticeable impact. Based on these findings, we find advisable to not use objectives increasing $\mut{Z_A}{L}$ or playing directly against the specialisation of $\phi_A$.

\textbf{On the importance of the batch size and data diversity.} We now turn our attention to an apparent contradiction in the relationship between value distillation and performance. Decoupling PPO, and thus completely forgoing value distillation, leads to improved train and test scores (\Cref{fig:complete-returns}). However, PPG and DCPG perform extensive value distillation (four times as many distillation updates as coupled PPO in our experiments), and achieve significant performance improvements. Crucially, conducting value distillation every $N_\pi$ policy phases ensures the batch size $B_\text{aux}$ is $N_\pi$ times larger than the PPO batch size, greatly increasing its data diversity. \citet{KaixinPPG} report that increasing diversity is a key driver of performance improvement at equal number of gradient updates. We reproduce their experiment in \Cref{fig:PPGvalue_dist}, while also tracking the evolution of $\mut{Z_A}{L}$ and $\mut{Z_A}{V}$ as $B_\text{aux}$ increases. We find that the increase in $\mut{Z_A}{L}$ in PPG is mainly caused by these additional gradient updates, whereas agent performance and $\mut{Z_A}{V}$ only increase when performing value distillation over large, diverse training batches. Increasing data diversity should promote level-invariant state value information to be distilled into $\phi_A$, and we conclude that it is this level invariant information plays a key role in improving agent performance in PPG and DCPG.

Our experiments further reveal that dynamics prediction yields significant performance improvements, but only when applied to algorithms with large batch sizes (specifically in PPG and DCPG, not in PPO). In comparison, data augmentation, which increases data diversity, demonstrates performance benefits across all three algorithms. These findings suggest an important hypothesis: the actor benefits more from encouraging $\phi_A$ to extract level-invariant information about \textit{any} arbitrary quantity $Y$, rather than from specifically choosing what that quantity $Y$ should be (such as state values or dynamics).
\begin{figure}[!t]
    \begin{center}
    \begin{subfigure}[]{1\linewidth}
        \includegraphics[width=1\linewidth]{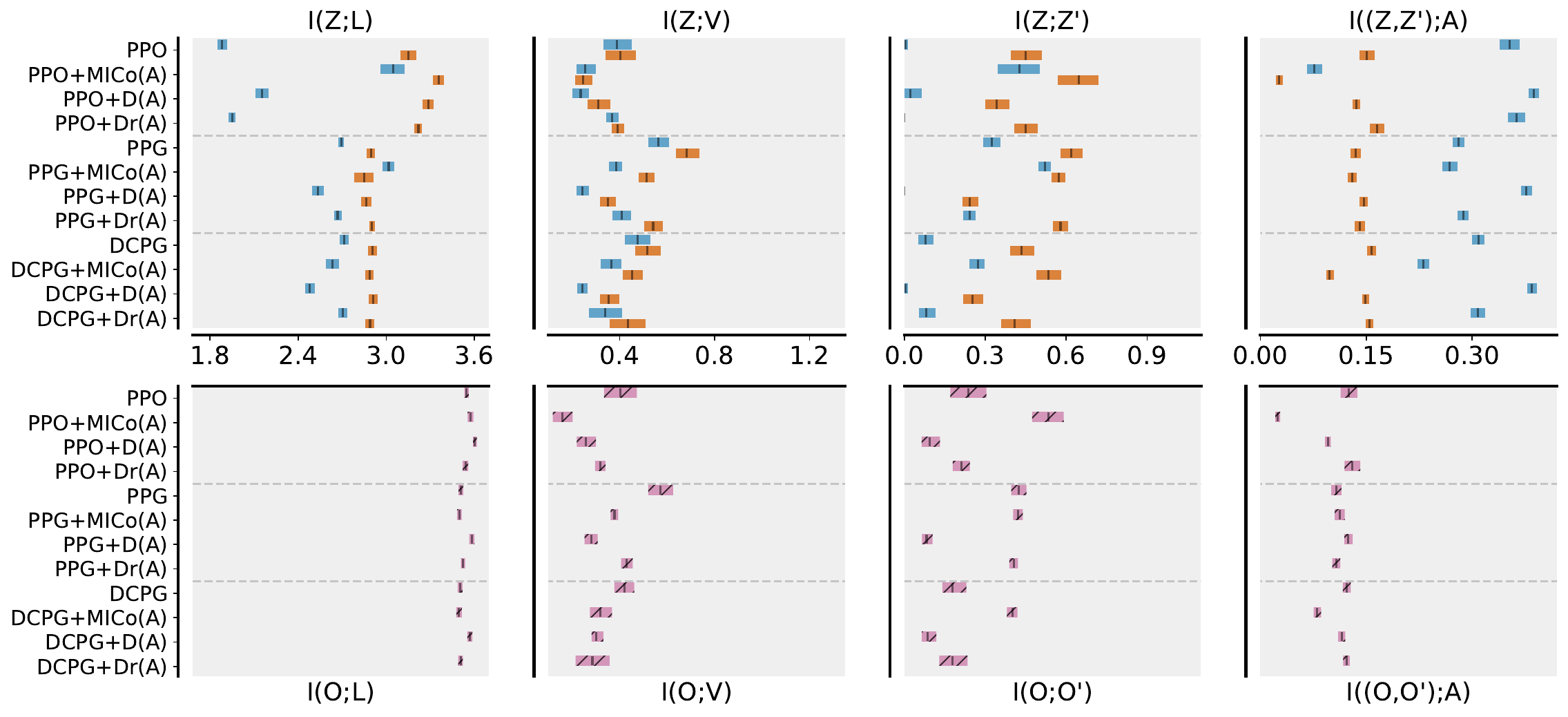}
    \end{subfigure}
    \end{center}
    \caption{Mean and 95\% confidence intervals of $I(Z;\cdot)$/$I(O;\cdot)$ (top/bottom) for actor (blue) and critic (orange) representations in Procgen. Information measured from agent observations shown in pink. X-axes are shared across top and bottom. Auxiliary tasks shown are MICo, dynamics prediction (D), and data augmentation (Dr) applied to the actor (A).}
    
    \label{fig:rep_actor_procgen}
\end{figure}
\section{The critic's objective influences data collection}
We now consider how the same set of representation learning objectives affect the critic's representation and present our results in \Cref{fig:critic_objs,fig:brax_all_mi}. The effect of a given objective on the information extracted by $\phi_C$ is consistent with how they would have affected $\phi_A$ in the previous section. However, we report two surprising findings: a) Without conducting any value distillation, decoupled PPO has a 37\% higher $\mut{Z_A}{V}$ than shared PPO (\Cref{tab:1}), and b) the information specialisation of $\phi_C$ incurred by applying an objective on the critic is often observed in $\phi_A$, albeit to a lesser extent. Given that the two representations are decoupled, how can an objective applied to $\phi_C$ affect $\phi_A$?

As we maintain different optimisers for the actor and critic, their only remaining interaction in decoupled PPO is through $J_\pi$ (\Cref{eq:J_PPO}): $\hat{A}_t$ being computed from the critic's value estimates. Therefore, at least one of the following hypothesis must hold:
\begin{enumerate}[leftmargin=*]
\item \textbf{Data collection bias.} Through $J^\pi$ updates, the critic biases $\pi$ to collect trajectories containing information relevant to its own learning objective. This information could then leak through $\phi_A$ because more of this information is contained in its input. In this scenario, it is not necessary for $\phi_A$ to become more proficient at extracting critic-relevant information. 

\item \textbf{Implicit knowledge transfer.} The advantage targets in $J^\pi$ induce information transfer between $\phi_C$ and $\phi_A$ when applying the gradients $\nabla_{\vtheta_A}J^\pi$. Here, $\phi_A$ becomes proficient at extracting the same information $\phi_C$ extracts.
\end{enumerate}

The first hypothesis broadly holds in our experiments: in most cases, applying MICo to the critic increases $\mut{O}{V}$ and $\mut{O}{O'}$, and applying dynamics prediction increases $\mut{(O,O')}{A}$\footnote{In contrast, \mut{O}{L} does not vary significantly, given that the policy does not control which level is played in an episode.}. Furthermore, $\mut{O}{V}$ increases when PPO is decoupled (\Cref{fig:shdec-spec-obs-comp}). Without the critic's influence, there would be no direct incentive for the actor to collect data that contains value information, since no value distillation is taking place. 

To test the second hypothesis, we measure the \textit{compression efficiency}, applicable whenever $\mut{O}{\cdot}>0$, and defined as
\begin{equation}\label{eqn:compression-eff}
\comp{Z}{\cdot} = \min\bigg(\frac{\mut{Z}{\cdot}}{\mut{O}{\cdot}}, 1\bigg).
\end{equation}
For example, $\comp{Z_A}{V}$ measures the fraction of available information in $\mut{O}{V}$ that is extracted by $\phi_A$.\footnote{By the data processing inequality we must have $\mut{O}{\cdot}\geq\mut{Z}{\cdot}$, and $\comp{Z}{\cdot}$ cannot be larger than 1. We enforce this upper bound as our estimator sometimes underestimates $\mut{O}{\cdot}$ for high dimensional observations.} In \Cref{tab:decoupled-C,tab:actor-C}, we report that $\comp{Z_A}{V}$ does not signficantly change between PPO[sh] and decoupled PPO, or when MICo is applied to the critic in decoupled PPO. This result appears to disprove the second hypothesis, at least in PPO. We cannot formally confirm whether implicit knowledge transfer occurs for PPG and DCPG, as explicit knowledge transfer already occurs through value distillation.

Finally, we highlight that this data collection bias generally leads to worse performance (\Cref{fig:complete-returns}). Interestingly, employing different representation learning objectives for the actor and the critic results in surprising interactions. Consider PPO in the Procgen setup: when applied separately, Adv(A) and MICo(C) have respectively a neutral and adverse effect on performance, the latter being caused by the data collection bias caused by the critic. When applied together, they exhibit a sharp performance gain, and bring $\mut{O}{V}$ down (\Cref{fig:procgen_all_miO}). This suggests that certain actor representation objectives could improve performance by cancelling the bias in data collection induced by the critic.
\section{Related work}
\textbf{Representation learning in RL.} Representation learning objectives  have been used in RL for a variety of reasons such as sample efficiency~\citep{original,deepmdp,rad,pisac,CURL}, planning~\citep{plan2explore,ptgood}, disentanglement~\citep{ted}, and generalisation~\citep{darla,darl}. Some works focus on designing metrics motivated by theoretical properties such as bisimulation metrics, pseudometrics, decompositions of MDP components, or successor features~\citep{bisim-original,proto-value,successor-feat,bisim-mdp,pse,MICo,ksme}.

\textbf{Analysing representations in RL.} Despite the large body of research into representation learning objectives in RL, relatively little work has gone into understanding the learned representations themselves \citep{wang2024investigating}. Several works use linear probing to determine how well learned representations relate to environment or agent properties~\citep{supervise-thyself,belief-repr,anand2019unsupervised,reward-probing}. Other works analyse the learned representation functions via saliency maps which help visualise where an agent is ``paying attention''~\citep{saliency-worth,saliency-question,mhairiCMID}.
\section{Conclusion}
In this work, we conducted an in-depth analysis of the representations learned by actor and critic networks in on-policy deep reinforcement learning. Our key findings revealed that when decoupled, actor and critic representations specialise in extracting different types of information from the environment. We found that the actor benefits from representation learning objectives that promote extracting level-invariant information. Finally, we discovered that the critic influences policy updates to collect data that is informative for its own learning objective.

We identify three important research directions to be tackled in future work: 1) Broadening this study's scope to cover different network architectures, representation learning objectives, or RL algorithms (such as off-policy RL~\citep{SAC}); 2) Employing these findings to design new representation learning objectives for the actor that target level-invariant information; 3) Leveraging the critic's influence on data collection to devise new exploration strategies in online RL.

\section*{Reproducibility Statement}
Reproducibility can be challenging without access to the data generated during experiments. To assist with this, we will make all of our experimental data, including model checkpoints, logged data and the code for reproducing the figures in this paper openly available at \url{https://github.com/francelico/deac-rep}.

\bibliography{basebibfile}

\begin{thebibliography}{51}
\providecommand{\natexlab}[1]{#1}
\providecommand{\url}[1]{\texttt{#1}}
\expandafter\ifx\csname urlstyle\endcsname\relax
  \providecommand{\doi}[1]{doi: #1}\else
  \providecommand{\doi}{doi: \begingroup \urlstyle{rm}\Url}\fi

\bibitem[Agarwal et~al.(2021)Agarwal, Machado, Castro, and Bellemare]{pse}
Rishabh Agarwal, Marlos~C. Machado, Pablo~Samuel Castro, and Marc~G. Bellemare.
\newblock Contrastive behavioral similarity embeddings for generalization in
  reinforcement learning.
\newblock In \emph{9th International Conference on Learning Representations,
  {ICLR} 2021, Virtual Event, Austria, May 3-7, 2021}. OpenReview.net, 2021.
\newblock URL \url{https://openreview.net/forum?id=qda7-sVg84}.

\bibitem[Allen et~al.(2021)Allen, Parikh, Gottesman, and Konidaris]{markov_z}
Cameron Allen, Neev Parikh, Omer Gottesman, and George Konidaris.
\newblock Learning markov state abstractions for deep reinforcement learning.
\newblock In Marc'Aurelio Ranzato, Alina Beygelzimer, Yann~N. Dauphin, Percy
  Liang, and Jennifer~Wortman Vaughan (eds.), \emph{Advances in Neural
  Information Processing Systems 34: Annual Conference on Neural Information
  Processing Systems 2021, NeurIPS 2021, December 6-14, 2021, virtual}, pp.\
  8229--8241, 2021.
\newblock URL
  \url{https://proceedings.neurips.cc/paper/2021/hash/454cecc4829279e64d624cd8a8c9ddf1-Abstract.html}.

\bibitem[Anand et~al.(2019)Anand, Racah, Ozair, Bengio, C{\^{o}}t{\'{e}}, and
  Hjelm]{anand2019unsupervised}
Ankesh Anand, Evan Racah, Sherjil Ozair, Yoshua Bengio, Marc{-}Alexandre
  C{\^{o}}t{\'{e}}, and R.~Devon Hjelm.
\newblock Unsupervised state representation learning in atari.
\newblock In Hanna~M. Wallach, Hugo Larochelle, Alina Beygelzimer, Florence
  d'Alch{\'{e}}{-}Buc, Emily~B. Fox, and Roman Garnett (eds.), \emph{Advances
  in Neural Information Processing Systems 32: Annual Conference on Neural
  Information Processing Systems 2019, NeurIPS 2019, December 8-14, 2019,
  Vancouver, BC, Canada}, pp.\  8766--8779, 2019.
\newblock URL
  \url{https://proceedings.neurips.cc/paper/2019/hash/6fb52e71b837628ac16539c1ff911667-Abstract.html}.

\bibitem[Atrey et~al.(2020)Atrey, Clary, and Jensen]{saliency-question}
Akanksha Atrey, Kaleigh Clary, and David~D. Jensen.
\newblock Exploratory not explanatory: Counterfactual analysis of saliency maps
  for deep reinforcement learning.
\newblock In \emph{8th International Conference on Learning Representations,
  {ICLR} 2020, Addis Ababa, Ethiopia, April 26-30, 2020}. OpenReview.net, 2020.
\newblock URL \url{https://openreview.net/forum?id=rkl3m1BFDB}.

\bibitem[Bellemare et~al.(2015)Bellemare, Naddaf, Veness, and Bowling]{ALE}
Marc~G. Bellemare, Yavar Naddaf, Joel Veness, and Michael Bowling.
\newblock The arcade learning environment: An evaluation platform for general
  agents (extended abstract).
\newblock In Qiang Yang and Michael~J. Wooldridge (eds.), \emph{Proceedings of
  the Twenty-Fourth International Joint Conference on Artificial Intelligence,
  {IJCAI} 2015, Buenos Aires, Argentina, July 25-31, 2015}, pp.\  4148--4152.
  {AAAI} Press, 2015.
\newblock URL \url{http://ijcai.org/Abstract/15/585}.

\bibitem[Bertr{\'{a}}n et~al.(2020)Bertr{\'{a}}n, Mart{\'{\i}}nez, Phielipp,
  and Sapiro]{instance_invariant}
Mart{\'{\i}}n Bertr{\'{a}}n, Natalia Mart{\'{\i}}nez, Mariano Phielipp, and
  Guillermo Sapiro.
\newblock Instance-based generalization in reinforcement learning.
\newblock In Hugo Larochelle, Marc'Aurelio Ranzato, Raia Hadsell,
  Maria{-}Florina Balcan, and Hsuan{-}Tien Lin (eds.), \emph{Advances in Neural
  Information Processing Systems 33: Annual Conference on Neural Information
  Processing Systems 2020, NeurIPS 2020, December 6-12, 2020, virtual}, 2020.
\newblock URL
  \url{https://proceedings.neurips.cc/paper/2020/hash/82674fc29bc0d9895cee346548c2cb5c-Abstract.html}.

\bibitem[Castro(2020)]{bisim-mdp}
Pablo~Samuel Castro.
\newblock Scalable methods for computing state similarity in deterministic
  markov decision processes.
\newblock In \emph{The Thirty-Fourth {AAAI} Conference on Artificial
  Intelligence, {AAAI} 2020, The Thirty-Second Innovative Applications of
  Artificial Intelligence Conference, {IAAI} 2020, The Tenth {AAAI} Symposium
  on Educational Advances in Artificial Intelligence, {EAAI} 2020, New York,
  NY, USA, February 7-12, 2020}, pp.\  10069--10076. {AAAI} Press, 2020.
\newblock URL \url{https://aaai.org/ojs/index.php/AAAI/article/view/6564}.

\bibitem[Castro et~al.(2021)Castro, Kastner, Panangaden, and Rowland]{MICo}
Pablo~Samuel Castro, Tyler Kastner, Prakash Panangaden, and Mark Rowland.
\newblock Mico: Improved representations via sampling-based state similarity
  for markov decision processes.
\newblock In Marc'Aurelio Ranzato, Alina Beygelzimer, Yann~N. Dauphin, Percy
  Liang, and Jennifer~Wortman Vaughan (eds.), \emph{Advances in Neural
  Information Processing Systems 34: Annual Conference on Neural Information
  Processing Systems 2021, NeurIPS 2021, December 6-14, 2021, virtual}, pp.\
  30113--30126, 2021.
\newblock URL
  \url{https://proceedings.neurips.cc/paper/2021/hash/fd06b8ea02fe5b1c2496fe1700e9d16c-Abstract.html}.

\bibitem[Castro et~al.(2023)Castro, Kastner, Panangaden, and Rowland]{ksme}
Pablo~Samuel Castro, Tyler Kastner, Prakash Panangaden, and Mark Rowland.
\newblock A kernel perspective on behavioural metrics for markov decision
  processes.
\newblock \emph{TMLR}, 2023.

\bibitem[Cobbe et~al.(2020)Cobbe, Hesse, Hilton, and Schulman]{procgen}
Karl Cobbe, Christopher Hesse, Jacob Hilton, and John Schulman.
\newblock Leveraging procedural generation to benchmark reinforcement learning.
\newblock In \emph{Proceedings of the 37th International Conference on Machine
  Learning, {ICML} 2020, 13-18 July 2020, Virtual Event}, volume 119 of
  \emph{Proceedings of Machine Learning Research}, pp.\  2048--2056. {PMLR},
  2020.
\newblock URL \url{http://proceedings.mlr.press/v119/cobbe20a.html}.

\bibitem[Cobbe et~al.(2021)Cobbe, Hilton, Klimov, and Schulman]{PPG}
Karl Cobbe, Jacob Hilton, Oleg Klimov, and John Schulman.
\newblock Phasic policy gradient.
\newblock In Marina Meila and Tong Zhang (eds.), \emph{Proceedings of the 38th
  International Conference on Machine Learning, {ICML} 2021, 18-24 July 2021,
  Virtual Event}, volume 139 of \emph{Proceedings of Machine Learning
  Research}, pp.\  2020--2027. {PMLR}, 2021.
\newblock URL \url{http://proceedings.mlr.press/v139/cobbe21a.html}.

\bibitem[Dayan(1993)]{successor-feat}
Peter Dayan.
\newblock Improving generalization for temporal difference learning: The
  successor representation.
\newblock \emph{Neural Computation}, 5:\penalty0 613--624, 1993.

\bibitem[Dunion et~al.(2023{\natexlab{a}})Dunion, McInroe, Luck, Hanna, and
  Albrecht]{mhairiCMID}
Mhairi Dunion, Trevor McInroe, Kevin~Sebastian Luck, Josiah Hanna, and
  Stefano~V. Albrecht.
\newblock Conditional mutual information for disentangled representations in
  reinforcement learning.
\newblock In Alice Oh, Tristan Naumann, Amir Globerson, Kate Saenko, Moritz
  Hardt, and Sergey Levine (eds.), \emph{Advances in Neural Information
  Processing Systems 36: Annual Conference on Neural Information Processing
  Systems 2023, NeurIPS 2023, New Orleans, LA, USA, December 10 - 16, 2023},
  2023{\natexlab{a}}.
\newblock URL
  \url{http://papers.nips.cc/paper\_files/paper/2023/hash/fd750154df5f199f94df897975621306-Abstract-Conference.html}.

\bibitem[Dunion et~al.(2023{\natexlab{b}})Dunion, McInroe, Luck, Hanna, and
  Albrecht]{ted}
Mhairi Dunion, Trevor McInroe, Kevin~Sebastian Luck, Josiah~P. Hanna, and
  Stefano~V. Albrecht.
\newblock Temporal disentanglement of representations for improved
  generalisation in reinforcement learning.
\newblock In \emph{The Eleventh International Conference on Learning
  Representations, {ICLR} 2023, Kigali, Rwanda, May 1-5, 2023}. OpenReview.net,
  2023{\natexlab{b}}.
\newblock URL \url{https://openreview.net/pdf?id=sPgP6aISLTD}.

\bibitem[Ferns et~al.(2004)Ferns, Panangaden, and Precup]{bisim-original}
Norman Ferns, Prakash Panangaden, and Doina Precup.
\newblock Metrics for finite markov decision processes.
\newblock In \emph{Conference on Uncertainty in Artificial Intelligence}, 2004.

\bibitem[Garcin et~al.(2024)Garcin, Doran, Guo, Lucas, and
  Albrecht]{Garcin2024DREDZT}
Samuel Garcin, James Doran, Shangmin Guo, Christopher~G. Lucas, and Stefano~V.
  Albrecht.
\newblock {DRED:} zero-shot transfer in reinforcement learning via
  data-regularised environment design.
\newblock In \emph{Forty-first International Conference on Machine Learning,
  {ICML} 2024, Vienna, Austria, July 21-27, 2024}. OpenReview.net, 2024.
\newblock URL \url{https://openreview.net/forum?id=uku9r6RROl}.

\bibitem[Gelada et~al.(2019)Gelada, Kumar, Buckman, Nachum, and
  Bellemare]{deepmdp}
Carles Gelada, Saurabh Kumar, Jacob Buckman, Ofir Nachum, and Marc~G.
  Bellemare.
\newblock Deepmdp: Learning continuous latent space models for representation
  learning.
\newblock In Kamalika Chaudhuri and Ruslan Salakhutdinov (eds.),
  \emph{Proceedings of the 36th International Conference on Machine Learning,
  {ICML} 2019, 9-15 June 2019, Long Beach, California, {USA}}, volume~97 of
  \emph{Proceedings of Machine Learning Research}, pp.\  2170--2179. {PMLR},
  2019.
\newblock URL \url{http://proceedings.mlr.press/v97/gelada19a.html}.

\bibitem[Guo et~al.(2018)Guo, Azar, Piot, Pires, and Munos]{belief-repr}
Zhaohan~Daniel Guo, Mohammad~Gheshlaghi Azar, Bilal Piot, Bernardo~A. Pires,
  and Rémi Munos.
\newblock Neural predictive belief representations.
\newblock \emph{ArXiv preprint}, abs/1811.06407, 2018.
\newblock URL \url{https://arxiv.org/abs/1811.06407}.

\bibitem[Haarnoja et~al.(2018)Haarnoja, Zhou, Abbeel, and Levine]{SAC}
Tuomas Haarnoja, Aurick Zhou, Pieter Abbeel, and Sergey Levine.
\newblock Soft actor-critic: Off-policy maximum entropy deep reinforcement
  learning with a stochastic actor.
\newblock In Jennifer~G. Dy and Andreas Krause (eds.), \emph{Proceedings of the
  35th International Conference on Machine Learning, {ICML} 2018,
  Stockholmsm{\"{a}}ssan, Stockholm, Sweden, July 10-15, 2018}, volume~80 of
  \emph{Proceedings of Machine Learning Research}, pp.\  1856--1865. {PMLR},
  2018.
\newblock URL \url{http://proceedings.mlr.press/v80/haarnoja18b.html}.

\bibitem[Higgins et~al.(2017)Higgins, Pal, Rusu, Matthey, Burgess, Pritzel,
  Botvinick, Blundell, and Lerchner]{darla}
Irina Higgins, Arka Pal, Andrei~A. Rusu, Lo{\"{\i}}c Matthey, Christopher
  Burgess, Alexander Pritzel, Matthew Botvinick, Charles Blundell, and
  Alexander Lerchner.
\newblock {DARLA:} improving zero-shot transfer in reinforcement learning.
\newblock In Doina Precup and Yee~Whye Teh (eds.), \emph{Proceedings of the
  34th International Conference on Machine Learning, {ICML} 2017, Sydney, NSW,
  Australia, 6-11 August 2017}, volume~70 of \emph{Proceedings of Machine
  Learning Research}, pp.\  1480--1490. {PMLR}, 2017.
\newblock URL \url{http://proceedings.mlr.press/v70/higgins17a.html}.

\bibitem[Huang et~al.(2022)Huang, Dossa, Ye, Braga, Chakraborty, Mehta, and
  Araújo]{huang2022cleanrl}
Shengyi Huang, Rousslan Fernand~Julien Dossa, Chang Ye, Jeff Braga, Dipam
  Chakraborty, Kinal Mehta, and João~G.M. Araújo.
\newblock Cleanrl: High-quality single-file implementations of deep
  reinforcement learning algorithms.
\newblock \emph{Journal of Machine Learning Research}, 23\penalty0
  (274):\penalty0 1--18, 2022.
\newblock URL \url{http://jmlr.org/papers/v23/21-1342.html}.

\bibitem[Jaderberg et~al.(2017)Jaderberg, Mnih, Czarnecki, Schaul, Leibo,
  Silver, and Kavukcuoglu]{original}
Max Jaderberg, Volodymyr Mnih, Wojciech~Marian Czarnecki, Tom Schaul, Joel~Z.
  Leibo, David Silver, and Koray Kavukcuoglu.
\newblock Reinforcement learning with unsupervised auxiliary tasks.
\newblock In \emph{5th International Conference on Learning Representations,
  {ICLR} 2017, Toulon, France, April 24-26, 2017, Conference Track
  Proceedings}. OpenReview.net, 2017.
\newblock URL \url{https://openreview.net/forum?id=SJ6yPD5xg}.

\bibitem[Jiang et~al.(2021)Jiang, Grefenstette, and Rockt{\"{a}}schel]{PLR}
Minqi Jiang, Edward Grefenstette, and Tim Rockt{\"{a}}schel.
\newblock Prioritized level replay.
\newblock In Marina Meila and Tong Zhang (eds.), \emph{Proceedings of the 38th
  International Conference on Machine Learning, {ICML} 2021, 18-24 July 2021,
  Virtual Event}, volume 139 of \emph{Proceedings of Machine Learning
  Research}, pp.\  4940--4950. {PMLR}, 2021.
\newblock URL \url{http://proceedings.mlr.press/v139/jiang21b.html}.

\bibitem[Kirk et~al.(2023)Kirk, Zhang, Grefenstette, and
  Rockt{\"a}schel]{kirk2023survey}
Robert Kirk, Amy Zhang, Edward Grefenstette, and Tim Rockt{\"a}schel.
\newblock A survey of zero-shot generalisation in deep reinforcement learning.
\newblock \emph{Journal of Artificial Intelligence Research}, 76:\penalty0
  201--264, 2023.

\bibitem[Kraskov et~al.(2004)Kraskov, St{\"o}gbauer, and
  Grassberger]{kraskov2004estimating}
Alexander Kraskov, Harald St{\"o}gbauer, and Peter Grassberger.
\newblock Estimating mutual information.
\newblock \emph{Physical Review E—Statistical, Nonlinear, and Soft Matter
  Physics}, 69\penalty0 (6):\penalty0 066138, 2004.

\bibitem[Laskin et~al.(2020{\natexlab{a}})Laskin, Lee, Stooke, Pinto, Abbeel,
  and Srinivas]{rad}
Michael Laskin, Kimin Lee, Adam Stooke, Lerrel Pinto, Pieter Abbeel, and
  Aravind Srinivas.
\newblock Reinforcement learning with augmented data.
\newblock In Hugo Larochelle, Marc'Aurelio Ranzato, Raia Hadsell,
  Maria{-}Florina Balcan, and Hsuan{-}Tien Lin (eds.), \emph{Advances in Neural
  Information Processing Systems 33: Annual Conference on Neural Information
  Processing Systems 2020, NeurIPS 2020, December 6-12, 2020, virtual},
  2020{\natexlab{a}}.
\newblock URL
  \url{https://proceedings.neurips.cc/paper/2020/hash/e615c82aba461681ade82da2da38004a-Abstract.html}.

\bibitem[Laskin et~al.(2020{\natexlab{b}})Laskin, Srinivas, and Abbeel]{CURL}
Michael Laskin, Aravind Srinivas, and Pieter Abbeel.
\newblock {CURL:} contrastive unsupervised representations for reinforcement
  learning.
\newblock In \emph{Proceedings of the 37th International Conference on Machine
  Learning, {ICML} 2020, 13-18 July 2020, Virtual Event}, volume 119 of
  \emph{Proceedings of Machine Learning Research}, pp.\  5639--5650. {PMLR},
  2020{\natexlab{b}}.
\newblock URL \url{http://proceedings.mlr.press/v119/laskin20a.html}.

\bibitem[Lee et~al.(2020)Lee, Fischer, Liu, Guo, Lee, Canny, and
  Guadarrama]{pisac}
Kuang{-}Huei Lee, Ian Fischer, Anthony Liu, Yijie Guo, Honglak Lee, John Canny,
  and Sergio Guadarrama.
\newblock Predictive information accelerates learning in {RL}.
\newblock In Hugo Larochelle, Marc'Aurelio Ranzato, Raia Hadsell,
  Maria{-}Florina Balcan, and Hsuan{-}Tien Lin (eds.), \emph{Advances in Neural
  Information Processing Systems 33: Annual Conference on Neural Information
  Processing Systems 2020, NeurIPS 2020, December 6-12, 2020, virtual}, 2020.
\newblock URL
  \url{https://proceedings.neurips.cc/paper/2020/hash/89b9e0a6f6d1505fe13dea0f18a2dcfa-Abstract.html}.

\bibitem[Li et~al.(2021)Li, François-Lavet, Doan, and Pineau]{darl}
Bonnie Li, Vincent François-Lavet, Thang Doan, and Joelle Pineau.
\newblock Domain adversarial reinforcement learning.
\newblock \emph{ArXiv preprint}, abs/2102.07097, 2021.
\newblock URL \url{https://arxiv.org/abs/2102.07097}.

\bibitem[Mahadevan \& Maggioni(2007)Mahadevan and Maggioni]{proto-value}
Sridhar Mahadevan and Mauro Maggioni.
\newblock Proto-value functions: A laplacian framework for learn- ing
  representation and control in markov decision processes.
\newblock \emph{JMLR}, 2007.

\bibitem[McInroe \& Garcin(2025)McInroe and Garcin]{mcinroe2025pixelbrax}
Trevor McInroe and Samuel Garcin.
\newblock Pixelbrax: Learning continuous control from pixels end-to-end on the
  gpu.
\newblock \emph{ArXiv preprint}, abs/2502.00021, 2025.
\newblock URL \url{https://arxiv.org/abs/2502.00021}.

\bibitem[McInroe et~al.(2023)McInroe, Schäfer, and Albrecht]{hksl}
Trevor McInroe, Lukas Schäfer, and Stefano~V. Albrecht.
\newblock Multi-horizon representations with hierarchical forward models for
  reinforcement learning.
\newblock \emph{TMLR}, 2023.

\bibitem[McInroe et~al.(2024)McInroe, Jelley, Albrecht, and Storkey]{ptgood}
Trevor McInroe, Adam Jelley, Stefano~V. Albrecht, and Amos Storkey.
\newblock Planning to go out-of-distribution in offline-to-online reinforcement
  learning.
\newblock In \emph{RLC}, 2024.

\bibitem[Moon et~al.(2022)Moon, Lee, and Song]{DCPG}
Seungyong Moon, JunYeong Lee, and Hyun~Oh Song.
\newblock Rethinking value function learning for generalization in
  reinforcement learning.
\newblock In Sanmi Koyejo, S.~Mohamed, A.~Agarwal, Danielle Belgrave, K.~Cho,
  and A.~Oh (eds.), \emph{Advances in Neural Information Processing Systems 35:
  Annual Conference on Neural Information Processing Systems 2022, NeurIPS
  2022, New Orleans, LA, USA, November 28 - December 9, 2022}, 2022.
\newblock URL
  \url{http://papers.nips.cc/paper\_files/paper/2022/hash/e19ab2dde2e60cf68d1ded18c38938f4-Abstract-Conference.html}.

\bibitem[Racah \& Pal(2019)Racah and Pal]{supervise-thyself}
Evan Racah and Christopher Pal.
\newblock Supervise thyself: Examining self-supervised representations in
  interactive environments.
\newblock \emph{ArXiv preprint}, abs/1906.11951, 2019.
\newblock URL \url{https://arxiv.org/abs/1906.11951}.

\bibitem[Raileanu \& Fergus(2021)Raileanu and Fergus]{raileanu2021decoupling}
Roberta Raileanu and Rob Fergus.
\newblock Decoupling value and policy for generalization in reinforcement
  learning.
\newblock In Marina Meila and Tong Zhang (eds.), \emph{Proceedings of the 38th
  International Conference on Machine Learning, {ICML} 2021, 18-24 July 2021,
  Virtual Event}, volume 139 of \emph{Proceedings of Machine Learning
  Research}, pp.\  8787--8798. {PMLR}, 2021.
\newblock URL \url{http://proceedings.mlr.press/v139/raileanu21a.html}.

\bibitem[Raileanu et~al.(2021)Raileanu, Goldstein, Yarats, Kostrikov, and
  Fergus]{raileanu2021UCB-DrAC}
Roberta Raileanu, Max Goldstein, Denis Yarats, Ilya Kostrikov, and Rob Fergus.
\newblock Automatic data augmentation for generalization in reinforcement
  learning.
\newblock In Marc'Aurelio Ranzato, Alina Beygelzimer, Yann~N. Dauphin, Percy
  Liang, and Jennifer~Wortman Vaughan (eds.), \emph{Advances in Neural
  Information Processing Systems 34: Annual Conference on Neural Information
  Processing Systems 2021, NeurIPS 2021, December 6-14, 2021, virtual}, pp.\
  5402--5415, 2021.
\newblock URL
  \url{https://proceedings.neurips.cc/paper/2021/hash/2b38c2df6a49b97f706ec9148ce48d86-Abstract.html}.

\bibitem[Ross(2014)]{ross2014mutual}
Brian~C Ross.
\newblock Mutual information between discrete and continuous data sets.
\newblock \emph{PloS one}, 9\penalty0 (2):\penalty0 e87357, 2014.

\bibitem[Rosynski et~al.(2020)Rosynski, Kirchner, and
  Valdenegro-Toro]{saliency-worth}
Matthias Rosynski, Frank Kirchner, and Matias Valdenegro-Toro.
\newblock Are gradient-based saliency maps useful in deep reinforcement
  learning?
\newblock \emph{ArXiv preprint}, abs/2012.01281, 2020.
\newblock URL \url{https://arxiv.org/abs/2012.01281}.

\bibitem[Schulman et~al.(2016)Schulman, Moritz, Levine, Jordan, and
  Abbeel]{GAE}
John Schulman, Philipp Moritz, Sergey Levine, Michael~I. Jordan, and Pieter
  Abbeel.
\newblock High-dimensional continuous control using generalized advantage
  estimation.
\newblock In Yoshua Bengio and Yann LeCun (eds.), \emph{4th International
  Conference on Learning Representations, {ICLR} 2016, San Juan, Puerto Rico,
  May 2-4, 2016, Conference Track Proceedings}, 2016.
\newblock URL \url{http://arxiv.org/abs/1506.02438}.

\bibitem[Schulman et~al.(2017)Schulman, Wolski, Dhariwal, Radford, and
  Klimov]{PPO}
John Schulman, Filip Wolski, Prafulla Dhariwal, Alec Radford, and Oleg Klimov.
\newblock Proximal policy optimization algorithms.
\newblock \emph{arXiv}, 2017.

\bibitem[Schwarzer et~al.(2021)Schwarzer, Anand, Goel, Hjelm, Courville, and
  Bachman]{spr}
Max Schwarzer, Ankesh Anand, Rishab Goel, R.~Devon Hjelm, Aaron~C. Courville,
  and Philip Bachman.
\newblock Data-efficient reinforcement learning with self-predictive
  representations.
\newblock In \emph{9th International Conference on Learning Representations,
  {ICLR} 2021, Virtual Event, Austria, May 3-7, 2021}. OpenReview.net, 2021.
\newblock URL \url{https://openreview.net/forum?id=uCQfPZwRaUu}.

\bibitem[Sekar et~al.(2020)Sekar, Rybkin, Daniilidis, Abbeel, Hafner, and
  Pathak]{plan2explore}
Ramanan Sekar, Oleh Rybkin, Kostas Daniilidis, Pieter Abbeel, Danijar Hafner,
  and Deepak Pathak.
\newblock Planning to explore via self-supervised world models.
\newblock In \emph{Proceedings of the 37th International Conference on Machine
  Learning, {ICML} 2020, 13-18 July 2020, Virtual Event}, volume 119 of
  \emph{Proceedings of Machine Learning Research}, pp.\  8583--8592. {PMLR},
  2020.
\newblock URL \url{http://proceedings.mlr.press/v119/sekar20a.html}.

\bibitem[Stone et~al.(2021)Stone, Ramirez, Konolige, and
  Jonschkowski]{stone2021distracting}
Austin Stone, Oscar Ramirez, Kurt Konolige, and Rico Jonschkowski.
\newblock The distracting control suite -- a challenging benchmark for
  reinforcement learning from pixels.
\newblock \emph{ArXiv preprint}, abs/2101.02722, 2021.
\newblock URL \url{https://arxiv.org/abs/2101.02722}.

\bibitem[Wang et~al.(2024)Wang, Miahi, White, Machado, Abbas, Kumaraswamy, Liu,
  and White]{wang2024investigating}
Han Wang, Erfan Miahi, Martha White, Marlos~C Machado, Zaheer Abbas, Raksha
  Kumaraswamy, Vincent Liu, and Adam White.
\newblock Investigating the properties of neural network representations in
  reinforcement learning.
\newblock \emph{Artificial Intelligence}, 330:\penalty0 104100, 2024.

\bibitem[Wang et~al.(2023)Wang, Zhou, Feng, and Mannor]{KaixinPPG}
Kaixin Wang, Daquan Zhou, Jiashi Feng, and Shie Mannor.
\newblock {PPG} reloaded: An empirical study on what matters in phasic policy
  gradient.
\newblock In Andreas Krause, Emma Brunskill, Kyunghyun Cho, Barbara Engelhardt,
  Sivan Sabato, and Jonathan Scarlett (eds.), \emph{International Conference on
  Machine Learning, {ICML} 2023, 23-29 July 2023, Honolulu, Hawaii, {USA}},
  volume 202 of \emph{Proceedings of Machine Learning Research}, pp.\
  36694--36713. {PMLR}, 2023.
\newblock URL \url{https://proceedings.mlr.press/v202/wang23aw.html}.

\bibitem[Weng et~al.(2022)Weng, Lin, Huang, Liu, Makoviichuk, Makoviychuk, Liu,
  Song, Luo, Jiang, Xu, and Yan]{weng2022envpool}
Jiayi Weng, Min Lin, Shengyi Huang, Bo~Liu, Denys Makoviichuk, Viktor
  Makoviychuk, Zichen Liu, Yufan Song, Ting Luo, Yukun Jiang, Zhongwen Xu, and
  Shuicheng Yan.
\newblock Envpool: {A} highly parallel reinforcement learning environment
  execution engine.
\newblock In Sanmi Koyejo, S.~Mohamed, A.~Agarwal, Danielle Belgrave, K.~Cho,
  and A.~Oh (eds.), \emph{Advances in Neural Information Processing Systems 35:
  Annual Conference on Neural Information Processing Systems 2022, NeurIPS
  2022, New Orleans, LA, USA, November 28 - December 9, 2022}, 2022.
\newblock URL
  \url{http://papers.nips.cc/paper\_files/paper/2022/hash/8caaf08e49ddbad6694fae067442ee21-Abstract-Datasets\_and\_Benchmarks.html}.

\bibitem[Yarats et~al.(2021{\natexlab{a}})Yarats, Kostrikov, and
  Fergus]{yarats2021image}
Denis Yarats, Ilya Kostrikov, and Rob Fergus.
\newblock Image augmentation is all you need: Regularizing deep reinforcement
  learning from pixels.
\newblock In \emph{9th International Conference on Learning Representations,
  {ICLR} 2021, Virtual Event, Austria, May 3-7, 2021}. OpenReview.net,
  2021{\natexlab{a}}.
\newblock URL \url{https://openreview.net/forum?id=GY6-6sTvGaf}.

\bibitem[Yarats et~al.(2021{\natexlab{b}})Yarats, Zhang, Kostrikov, Amos,
  Pineau, and Fergus]{sac-ae}
Denis Yarats, Amy Zhang, Ilya Kostrikov, Brandon Amos, Joelle Pineau, and Rob
  Fergus.
\newblock Improving sample efficiency in model-free reinforcement learning from
  images.
\newblock In \emph{Thirty-Fifth {AAAI} Conference on Artificial Intelligence,
  {AAAI} 2021, Thirty-Third Conference on Innovative Applications of Artificial
  Intelligence, {IAAI} 2021, The Eleventh Symposium on Educational Advances in
  Artificial Intelligence, {EAAI} 2021, Virtual Event, February 2-9, 2021},
  pp.\  10674--10681. {AAAI} Press, 2021{\natexlab{b}}.
\newblock URL \url{https://ojs.aaai.org/index.php/AAAI/article/view/17276}.

\bibitem[Zhang et~al.(2021)Zhang, McAllister, Calandra, Gal, and Levine]{dbc}
Amy Zhang, Rowan~Thomas McAllister, Roberto Calandra, Yarin Gal, and Sergey
  Levine.
\newblock Learning invariant representations for reinforcement learning without
  reconstruction.
\newblock In \emph{9th International Conference on Learning Representations,
  {ICLR} 2021, Virtual Event, Austria, May 3-7, 2021}. OpenReview.net, 2021.
\newblock URL \url{https://openreview.net/forum?id=-2FCwDKRREu}.

\bibitem[Zhang et~al.(2024)Zhang, GX-Chen, Sobal, LeCun, , and
  Carion]{reward-probing}
Wancong Zhang, Anthony GX-Chen, Vlad Sobal, Yann LeCun, , and Nicolas Carion.
\newblock Light-weight probing of unsupervised representations for
  reinforcement learning.
\newblock In \emph{RLC}, 2024.

\end{thebibliography}
\bibliographystyle{iclr2025_conference}

\appendix

\section{Theoretical results}\label{app:proof}
\THGenBound*
\begin{proof}
    This result directly follows from a result obtained by \citet{instance_invariant} and reproduced below.
    \begin{theorem}
For any CMDP such that $|V^\pi (x)| \leq D/2,\forall x, \pi$, with $D$ being a constant, then for any set of training levels $L$, and policy $\pi$
\begin{equation}
    \E_{c\sim \mathcal{U}(L), x_0 \sim \mathcal{P}_0(c)}[V^\pi(x_0)] - \E_{c\sim P(\rc), x_0 \sim \mathcal{P}_0(c)}[V^\pi(x_0)] \leq \sqrt{\frac{2D^2}{|L|} \times \mut{\pi}{L}},
\end{equation}
\end{theorem}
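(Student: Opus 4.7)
My plan is to derive the stated bound as a corollary of the existing bound in terms of $\mut{\pi}{L}$ from \citet{instance_invariant}, by showing that $\mut{\pi}{L} \leq \mut{Z_A}{L}$. The inequality $\sqrt{\cdot}$ is monotone in its argument, so once I establish this mutual information inequality, replacing $\mut{\pi}{L}$ with $\mut{Z_A}{L}$ on the right-hand side of the cited bound immediately yields the desired result. The entire non-trivial content of the proof is therefore concentrated in the data-processing argument.

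\textbf{Key step: data processing.} The actor network factorises as $\pi_{\vtheta_A} = f_{\vomega} \circ \phi_A$, so conditional on an observation $o$ the action distribution $\pi(\cdot \mid o)$ is a deterministic function of $z_A = \phi_A(o)$. Viewing the random objects on a trajectory sampled from $\pi$ in levels drawn uniformly from $L$, this gives the Markov chain
\begin{equation}
L \;\longrightarrow\; O \;\longrightarrow\; Z_A \;\longrightarrow\; \pi,
\end{equation}
where $\pi$ here denotes the induced (observation-dependent) action distribution or, equivalently under the convention used by \citet{instance_invariant}, a sampled action $A \sim \pi(\cdot \mid O)$. The data processing inequality applied to this Markov chain yields $\mut{\pi}{L} \leq \mut{Z_A}{L}$. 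Substituting into the cited bound and using monotonicity of $\sqrt{\cdot}$ gives the claim.

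\textbf{Main obstacle.} The only subtlety I anticipate is reconciling notation: I need $\mut{\pi}{L}$ as defined by \citet{instance_invariant} to be exactly the mutual information of $L$ with some quantity that is a measurable function of $Z_A$ (either the induced action distribution, or an action sampled from it, together with the state/observation treated as a common argument). If their definition already matches the interpretation above, the data processing step is immediate. If instead $\mut{\pi}{L}$ is defined through some aggregate functional of $\pi$ that depends on $L$ only via $\phi_A$ composed with $f_{\vomega}$, the same deterministic-downstream argument still applies. Either way no new quantitative estimates are needed; the proof proposal is a one-line reduction once the Markov chain is identified.
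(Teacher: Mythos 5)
There is a genuine gap here, and it is structural rather than technical: your proposal does not prove the statement in question. The statement is the bound with $\mut{\pi}{L}$ on the right-hand side, i.e.\ the result of \citet{instance_invariant} itself, which the paper merely reproduces and cites inside its proof of \Cref{th:genbound} without proving it. Your plan takes exactly this bound as its starting point (``the existing bound in terms of $\mut{\pi}{L}$'') and then derives the downstream corollary with $\mut{Z_A}{L}$ via the Markov chain $L \rightarrow O \rightarrow Z_A \rightarrow \pi$ and the data processing inequality. That reduction is correct and coincides with the paper's own proof of \Cref{th:genbound}, but relative to the literal statement it is circular: the premise you invoke is the conclusion you were asked to establish, and the one inequality you actually prove, $\mut{\pi}{L} \leq \mut{Z_A}{L}$, goes in the wrong direction for this purpose --- it replaces the target quantity by a larger one rather than bounding the generalisation gap by it.

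What is missing is any argument linking $\E_{c\sim \mathcal{U}(L), x_0 \sim \mathcal{P}_0(c)}[V^\pi(x_0)] - \E_{c\sim P(\rc), x_0 \sim \mathcal{P}_0(c)}[V^\pi(x_0)]$ to $\mut{\pi}{L}$ in the first place. A genuine proof must treat the learned policy as a random variable statistically coupled to the sampled training set $L$, use the hypothesis $|V^\pi(x)| \leq D/2$ to obtain a sub-Gaussian (or bounded-range) control on the per-level expected return, and then apply a Donsker--Varadhan or Pinsker-type inequality, averaged over the $|L|$ levels, to extract the $\sqrt{2D^2 \mut{\pi}{L}/|L|}$ rate. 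None of that machinery appears in your proposal. To be fair, it does not appear in the paper either --- the paper imports the statement as a black-box citation --- but the paper does not claim to prove it, whereas your proposal does. Had the intended target been \Cref{th:genbound} (the $\mut{Z_A}{L}$ version), your argument would match the paper's essentially line for line.
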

Then, as $\pi \triangleq f \circ \phi_A$, by the data processing inequality we always have $\mut{\pi}{L} \leq \mut{Z_A}{L}$, and therefore,
\begin{align*}
    \E_{c\sim \mathcal{U}(L), x_0 \sim \mathcal{P}_0(c)}[V^\pi(x_0)] - \E_{c\sim P(\rc), x_0 \sim \mathcal{P}_0(c)}[V^\pi(x_0)] & \leq \sqrt{\frac{2D^2}{|L|} \times \mut{\pi}{L}}\\ 
& \leq \sqrt{\frac{2D^2}{|L|} \times \mut{Z_A}{L}}
\end{align*}.

\citet{Garcin2024DREDZT} follow the same reasoning and obtain an equivalent result, without restating the bound.

\end{proof}

\THmarkovmi*
\begin{proof}
This proof has two part. We first demonstrate that the Inverse Model condition of \Cref{th:markov_z} from \citet{markov_z} (reproduced below) is satisfied if and only if $\mut{(Z,Z')}{A}=\mut{(X,X')}{A}$. We then show that if $\mut{Z}{Z'} =\mut{X}{X'}$ then the Density Ratio condition is also satisfied.
    \begin{theorem}\label{th:markov_z}
    $\phi$ is a Markov representation if the following conditions hold for every timestep $t$ and any policy $\pi$:
    \begin{enumerate}
        \item \textbf{Inverse Model.} The inverse dynamic model, defined as $I(a|s',s)\coloneqq \frac{\mathcal{T}(s'|a,s)\pi(a|s)}{P^\pi(s'|s)}$, where $P^\pi(s'|s)=\sum_{\bar{a}\in\sA} \mathcal{T}(s'|\bar{a},s)\pi(a|s)$, should be equal in the original and reduced MDPs. That is we have $P^\pi(a|z',z)=P^\pi(a|s,s'), \forall a\in\sA, s,s'\in\sS$.
        \item \textbf{Density Ratio.} The original and abstract next-state density ratios are equal when conditioned on the same abstract state: $\frac{P^\pi(z'|z)}{P^\pi(z')}=\frac{P^\pi(s'|z)}{P^\pi(s')}, \forall x' \in \sS,$ where $P^\pi(s'|z)=\sum_{\bar{s}\in\sS} P^\pi(s'|\bar{s})\mu(\bar{s}|z)$ and $\mu(s|z) = \frac{\1_{\phi(s)=z} P^\pi(s)}{\sum_{\bar{s}\in\sS} P^\pi(s|\bar{s})}$. $P^\pi(s'|z)$ is the probability of transitioning to state $s'$ and $\mu(s|z)$ is the probability of currently being in state $s$ when in latent state $z$.
    \end{enumerate}
\end{theorem}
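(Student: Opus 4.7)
The plan is to conclude via Theorem~\ref{th:markov_z} from \citet{markov_z}, which asserts that $\phi$ induces a reduced MDP whose $\mathcal{T}_z$ satisfies the Markov property as soon as the Inverse Model condition $P^\pi(a \mid z, z') = P^\pi(a \mid x, x')$ and the Density Ratio condition $P^\pi(z' \mid z)/P^\pi(z') = P^\pi(x' \mid z)/P^\pi(x')$ both hold. The entire content of the proof will therefore be (i) to derive each of these two conditions from one of the mutual-information equalities, and (ii) to explicitly invoke Theorem~\ref{th:markov_z} as the last step to promote the two conditions to the Markov property of $\mathcal{T}_z$. The ``under $\pi$'' versus ``under any $\pi$'' distinction in the statement is then purely a matter of whether the hypotheses are assumed for one specific $\pi$ or uniformly over all policies, since Theorem~\ref{th:markov_z} is applied pointwise in $\pi$.

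For the Inverse Model condition, I would exploit that $(Z,Z') = (\phi(X), \phi(X'))$ is a deterministic function of $(X,X')$. The data processing inequality gives $\mut{(X,X')}{A} \geq \mut{(Z,Z')}{A}$, and the equality case is precisely the conditional independence $A \perp (X,X') \mid (Z,Z')$. Written out, this states $P^\pi(a \mid x, x', z, z') = P^\pi(a \mid z, z')$; since $(z, z')$ is determined by $(x, x')$, the left-hand side collapses to $P^\pi(a \mid x, x')$, giving the Inverse Model condition.

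For the Density Ratio condition, I would apply the same equality-case reasoning to the chain $\mut{X}{X'} \geq \mut{Z}{X'} \geq \mut{Z}{Z'}$ induced by the determinism of $\phi$. Equality at both ends forces equality throughout, so in particular $\mut{Z}{X'} = \mut{Z}{Z'}$, which is the conditional independence $Z \perp X' \mid Z'$, i.e.\ $P^\pi(z \mid x') = P^\pi(z \mid z')$. Applying Bayes' rule twice,
\begin{equation*}
\frac{P^\pi(x' \mid z)}{P^\pi(x')} = \frac{P^\pi(z \mid x')}{P^\pi(z)} = \frac{P^\pi(z \mid z')}{P^\pi(z)} = \frac{P^\pi(z' \mid z)}{P^\pi(z')},
\end{equation*}
which is the Density Ratio condition. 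With both conditions in hand, Theorem~\ref{th:markov_z} applied under $\pi$ yields that $\mathcal{T}_z$ is Markov when following $\pi$; if the two MI equalities hold for every policy, then both conditions hold uniformly in $\pi$ and Theorem~\ref{th:markov_z} delivers that $\mathcal{T}_z$ is Markov regardless of the data-collection policy, which is the stronger ``always'' conclusion.

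The main obstacle I expect is the equality-case characterisation of the data processing inequality: converting $\mut{U}{W} = \mut{f(U)}{W}$ into the pointwise identity $P(w \mid u) = P(w \mid f(u))$ is only guaranteed up to sets of measure zero under the joint law induced by $\pi$, so the pointwise statements of the Inverse Model and Density Ratio conditions should be read $P^\pi$-a.s. Since the reduced kernel $\mathcal{T}_z$ is itself only defined $P^\pi$-a.s., this is harmless for the Markov conclusion, but I would flag the subtlety explicitly rather than gloss over it, and note that the step from the MI equality to the Markov chain $Z - Z' - X'$ relies on the fact that $Z'$ is a \emph{function} of $X'$ (so the inner inequality in the DPI chain is tight precisely under that conditional independence).
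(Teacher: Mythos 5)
Your proposal is correct and, at the level of overall architecture, matches the paper's proof of Theorem~\ref{th:markov_mi}: both treat the quoted result of \citet{markov_z} as a black box and reduce all of the work to deriving its Inverse Model and Density Ratio hypotheses from the two mutual-information equalities, with the ``under $\pi$'' versus ``any $\pi$'' distinction handled exactly as you describe. For the Inverse Model condition the two arguments are the same in substance: the paper expands $\entropy(A|Z,Z')-\entropy(A|X,X')$ into $\E_{X,X'}[\KL(P^\pi(a|x,x')\,\Vert\,P^\pi(a|z,z'))]$ and applies Gibbs' inequality, which is precisely the equality case of the data processing inequality that you invoke abstractly. For the Density Ratio condition your route is genuinely different: the paper establishes the stronger pointwise identity $P^\pi(x'|x)/P^\pi(x')=P^\pi(z'|z)/P^\pi(z')$ by writing $\mut{X}{X'}-\mut{Z}{Z'}$ as an expected KL divergence, and then marginalises over $x$ given $z$ to recover the stated condition; you instead insert $\mut{Z}{X'}$ into the chain $\mut{X}{X'}\geq\mut{Z}{X'}\geq\mut{Z}{Z'}$, extract the conditional independence $Z\perp X'\mid Z'$ from tightness of the second inequality (which, as you correctly note, is where the determinism of $X'\mapsto Z'$ enters), and finish with two applications of Bayes' rule. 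Your version is more economical---it needs neither tightness of the first inequality nor the pointwise ratio identity, and it lands directly on the condition as stated---while the paper's computation is more self-contained in justifying that the relevant difference of mutual informations is a bona fide KL divergence. Your explicit flagging of the almost-everywhere caveat is consistent with the paper's ``$P=Q$ almost $\mu$-everywhere'' remark. No gaps.
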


We begin with two observations that are useful for our derivation.

Observation A: Given that any $z\in\sZ$ is obtained from the mapping $x \overset{\Omega}{\rightarrow} o \overset{\phi}{\rightarrow} z$, and that $h \triangleq \phi \circ \Omega$ is a deterministic (but not necessarily invertible) function, each element $x\in\sX$ maps to a single element $z\in\sZ$. It directly follows that $\forall a,z_1,z_2 \in \sA\times\sZ\times \sZ$, we have
\begin{equation*}
    p(a,z_1, z_2) = \sum_{x_1,x_2\in \sX^2} p(a,x_1,x_2) \1[z_1,z_2=h(x_1),h(x_2)]
\end{equation*}
and
\begin{equation*}
    p(z_1, z_2) = \sum_{x_1,x_2\in \sX^2} p(x_1,x_2) \1[z_1,z_2=h(x_1),h(x_2)]
\end{equation*}
Observation B: Let $P^{\pi}(a,x,x')$ be the joint distribution of elements in $(A,X,X')$ collected under policy $\pi$, we have $P^{\pi}(a,x_1,x_2)>0$ if and only if $a,x_1,x_2 \in (A,X,X')$.

Observation C: Similarly to obs. B, we have $P^{\pi}(x_1,x_2)>0$ if and only if $x_1,x_2 \in (X,X')$.

1) \textit{Proving that the Inverse Model condition is satisfied if and only if} $\mut{(Z,Z')}{A}=\mut{(X,X')}{A}$.

The above is equivalent to showing that the Inverse Model condition is satisfied if and only if $\entropy (A|Z,Z') = \entropy (A|X,X')$. For $\entropy (A|Z,Z')$, we have
\begin{align*}
    \entropy (A|Z,Z') &= -\sum_{A  ,  Z  ,  Z'} P^\pi(a,z,z') \log P^\pi(a|z,z') \\
    \text{(from obs. A)} \quad    &= -\sum_{\sA \times\sZ\times\sZ} \sum_{x_1,x_2\in \sX^2} P^\pi(a,x_1,x_2) \1[z,z'=h(x_1),h(x_2)]  \log P^\pi(a|z,z')\\
   \text{(from obs. B)} \quad & = -\sum_{\sA \times\sX\times\sX} P^\pi (a,x,x')\sum_{\sZ^2}\1[z,z'=h(x),h(x')]\log P^\pi (a|z,z') \\
    & = - \sum_{\sA \times\sX\times\sX} P^\pi (a,x,x') \log \prod_{\sZ^2} P^\pi (a|z,z')^{\1[z,z'=h(x),h(x')]}\\
    & = - \sum_{\sA \times\sX\times\sX} P^\pi (x,x')P^\pi (a|x,x') \log \prod_{\sZ^2} P^\pi (a|z,z')^{\1[z,z'=h(x),h(x')]}\\
    & = - \E_{X  ,  X'} [\sum_\sA P^\pi (a|x,x') \log \prod_{\sZ^2} P^\pi (a|z,z')^{\1[z,z'=h(x),h(x')]}]
\end{align*}
It follows that 
\begin{align*}
    \entropy (A|Z,Z') - \entropy (A|X,X') &=\E_{X  ,  X'}\bigg[\sum_\sA P^\pi (a|x,x') \log \frac{P^\pi (a|x,x')}{\prod_{\sZ^2} P^\pi (a|z,z')^{\1[z,z'=h(x),h(x')]}}\bigg] \\
    & = \E_{X,X'}[ \KL (P \Vert Q)],
\end{align*}
with $P=P^\pi (a|x,x')$ and $Q=\prod_{z,z'\in  Z,Z'} P^\pi (a|z,z')^{\1[z,z'=h(x),h(x')]}$. From Gibbs inequality we always have $\KL (p \Vert q) \geq 0$, therefore $\mut{(Z,Z')}{A}=\mut{(X,X')}{A}$ if and only if $\KL (P \Vert Q) = 0$  $\forall x,x' \in X,X'$, which is the case if and only if $P=Q$ almost $\mu$-everywhere.

From observation A, any $x_1,x_2\in \sX^2$  maps to exactly one pair $z_1,z_2\in\sZ^2$, and by construction of $X,X',Z,Z'$, for any pair $x,x'\in X,X'$, we must have $Q=\prod_{\bar{z},\bar{z}'\in  \sZ^2}P^\pi(a|\bar{z},\bar{z}')^{\1[\bar{z},\bar{z}'=h(x),h(x')]}=P^\pi (a|z,z')$, with $z,z'$ being the corresponding pair in $Z,Z'$.

Therefore $\mut{(Z,Z')}{A}=\mut{(X,X')}{A}$ if and only if $P^\pi (a|x,x')=P^\pi (a|z,z') \forall x,x',z,z'\in X,X',Z,Z'$, and we recover the Inverse Model condition.

Conversely, if the Inverse Model condition is not satisfied, then $\exists x,x',z,z',a\in X,X',Z,Z',A$ for which $P\neq Q$. Then $\KL (P \Vert Q)>0$ at $x,x'$ and $\mut{(Z,Z')}{A}<\mut{(X,X')}{A}$. 

2) \textit{Proving that the Density Ratio condition is satisfied if} $\mut{Z}{Z'}=\mut{X}{X'}$.

We first show that satisfying 
\begin{equation}\label{eq:sufficient_cond_density_ratio}
    \frac{P^\pi(x'|x)}{P^\pi(x')}=\frac{P^\pi(z'|z)}{P^\pi(z')} \quad \forall x,x',z,z'\in X,X',Z,Z'
\end{equation}
is sufficient for satisfying the Density Ratio condition $\frac{P^\pi(x'|z)}{P^\pi(x')}=\frac{P^\pi(z'|z)}{P^\pi(z')}$. We then show that the condition in \Cref{eq:sufficient_cond_density_ratio} holds if and only if $\mut{Z}{Z'}=\mut{X}{X'}$.

i) \textit{Showing the Density Ratio condition holds when  \Cref{eq:sufficient_cond_density_ratio} is satisfied.}
First we notice that, $\forall x',z \in X',Z$, we have
\begin{equation*}
    P^\pi(x'|z) = \sum_{\bar{x}\in \sX} \1[z=h(\bar{x})]P^\pi(x'|\bar{x}) = \E_X [P^\pi(x'|x)].
\end{equation*}
Then, supposing \Cref{eq:sufficient_cond_density_ratio} holds, we must have
\begin{equation*}
   P^\pi(x'|z) = \E_X [P^\pi(x'|x)] = P^\pi(x') \frac{P^\pi(z'|z)}{P^\pi(z')} \quad \forall x',z,z'\in X',Z,Z',
\end{equation*}
and the Density Ratio condition holds.

ii) \textit{Proving \Cref{eq:sufficient_cond_density_ratio} holds if and only if} $\mut{Z}{Z'}=\mut{X}{X'}$.

We have
\begin{align*}
\mut{Z}{Z'} &= \sum_{\sZ^2} P^\pi (z,z') \log\frac{P^\pi(z'|z)}{P^\pi(z')}\\
\text{(from obs. A)} \quad  &=  \sum_{\sZ^2} \sum_{x_1,x_2\in \sX^2}  P^\pi (x_1,x_2) \1[z,z'=h(x_1),h(x_2)] \log\frac{P^\pi(z'|z)}{P^\pi(z')}\\
\text{(from obs. C)} \quad &= \sum_{\sX^2} P^\pi (x,x') \sum_{\sZ^2} \1[z,z'=h(x),h(x')] \log\frac{P^\pi(z'|z)}{P^\pi(z')}\\
&= \E_{X,X'} \bigg[\log \prod_{\sZ^2} \bigg(\frac{P^\pi(z'|z)}{P^\pi(z')}\bigg)^{\1[z,z'=h(x),h(x')]}\bigg].
\end{align*}
Then,
\begin{equation*}
    \mut{X}{X'}- \mut{Z}{Z'} = \E_{X,X'}[ \KL (P' \Vert Q')],
\end{equation*}
with
\begin{equation*}
    P'=\frac{P^\pi(x'|x)}{P^\pi(x')} \quad \text{and} \quad Q=\prod_{\sZ^2}\bigg(\frac{P^\pi(z'|z)}{P^\pi(z')}\bigg)^{\1[z,z'=h(x),h(x')]}
\end{equation*}
The remainder of this part follows the same structure as for the first part of the proof. 

$\mut{X}{X'}=\mut{Z}{Z'}$ if and only if  $\forall x,x' \in X,X'$, $P=Q$ almost $\mu$-everywhere. Any $x_1,x_2\in \sX^2$  maps to exactly one pair $z_1,z_2\in\sZ^2$, and by construction of $X,X',Z,Z'$, for any pair $x,x'\in X,X'$, we must have 
\begin{equation*}
    Q=\prod_{\bar{z},\bar{z}'\in\sZ^2}\bigg(\frac{P^\pi(\bar{z}'|\bar{z})}{P^\pi(\bar{z}')}\bigg)^{\1[\bar{z},\bar{z}'=h(x),h(x')]} = \frac{P^\pi(z'|z)}{P^\pi(z')},
\end{equation*}
with $z,z'$ being the corresponding pair in $Z,Z'$.

Therefore $\mut{X}{X'} = \mut{Z}{Z'}$ if and only if $\forall x,x',z,z'\in X,X',Z,Z'$ we have $\frac{P^\pi(x'|x)}{P^\pi(x')}=\frac{P^\pi(z'|z)}{P^\pi(z')}$. Finally, from i) being true, the Density ratio condition must hold.
\end{proof}

\LEMmiZLcond*
\begin{proof}
    Given $\pi$ is fixed while the batch $O$ is collected, for a single batch the causal interaction between $L$, $O$ and $Z$ is described by the Markov chain $X \rightarrow O \rightarrow Z$, where $x=(s,\lvl) \in \sS \times L$ and isn't directly observed. By the data processing inequality, $\mut{L}{Z} \leq \mut{L}{O}$, and as such $\mut{L}{O}>0$ is a necessary condition for $\mut{L}{Z}$ to be positive.

    Note that
    \begin{equation*}
        \mut{L}{Z} = \entropy(L) + \entropy(Z) - \entropy(L,Z) = 0 \Leftrightarrow \entropy(L,Z) = \entropy(L) + \entropy(Z),
    \end{equation*}
    that is, if and only if $Z$ and $L$ are independently distributed. Given the causal relationship between $L$ and $Z$, $\mu(z|\lvl)$ is well defined $\forall z,\lvl \in Z\times L$. If $\exists z_k,\lvl_j \in Z\times L$ such that $\mu(z_k|\lvl_j) \neq \mu(z_k)$ then $Z$ and $L$ cannot be independently distributed, and $\mut{L}{Z}>0$.
\end{proof}

\LEMmiZZandZL*
\begin{proof}
    
    Proof for condition a) : By the causal structure $V \leftarrow X \rightarrow O \rightarrow Z$ and the chain rule of mutual information, we have
    \begin{equation*}
    \mut{Z}{L} = \mut{Z}{V} - \mut{Z}{V|L} + \mut{Z}{L|V},
    \end{equation*}

    $\mut{Z}{L|V}$ is the information encoded in $Z$ about the training levels that does not depend on $V$. $\mut{Z}{V} - \mut{Z}{V|L}$ represents the information encoded in $Z$ about state values that is level-specific. If this term increases then $\mut{Z}{L}$ will also increase.

    Proof for condition b) : Consider an episode of arbitrary length $N$ collected with policy $\pi$. We depict the causal structure that exists between elements in the top row of \Cref{fig:lemZZandZL} (elements may be repeated within each sequence). It naturally follows that we have the causal structure depicted in the bottom row when considering all levels in $L$. By the chain rule of mutual information, we have
    \begin{equation*}
        \mut{Z}{L} = \mut{Z}{(Z',L)} - \mut{Z}{Z'|L} = \mut{Z}{L|Z'} + \mut{Z}{Z'} - \mut{Z}{Z'|L},
    \end{equation*}
    and it follows that $\mut{Z}{L}$ increases with $\mut{Z}{Z'} - \mut{Z}{Z'|L}$. 
    
\end{proof}

\begin{figure}
\centering
    \begin{subfigure}[]{0.49\linewidth}
    \includegraphics[width=1\linewidth]{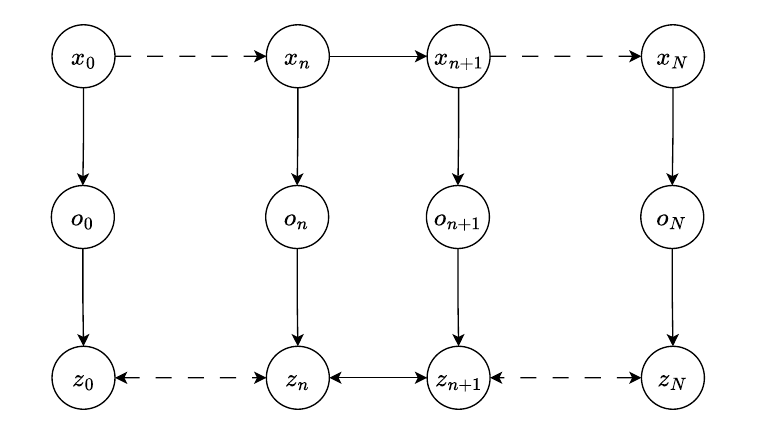}
    \end{subfigure}
    \begin{subfigure}[]{0.49\linewidth}
    \includegraphics[width=1\linewidth]{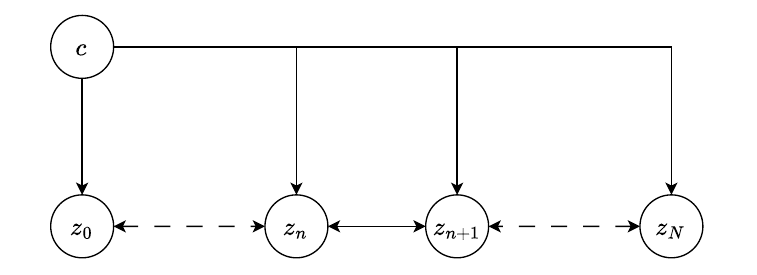}
    \end{subfigure}\\
    \begin{subfigure}[]{0.37\linewidth}
    \includegraphics[width=1\linewidth]{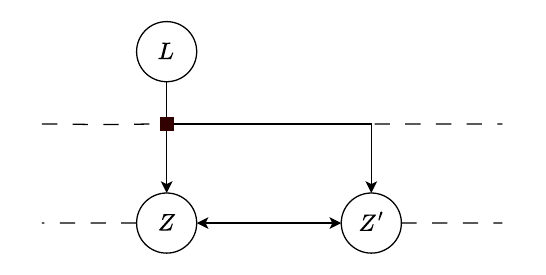}
    \end{subfigure}
    \caption{In the top row, left, we depict the causal graph of states, observation and latents obtained over an episode. On the same row we draw a simplified graph that focuses on the relationship between $c$ and $Z_{0:N}$, and utilises the notion that the context remains the same throughout the episode. In the bottom row we draw the resulting causal relationship between $L$, $Z$ and $Z'$.}
    \label{fig:lemZZandZL}
\end{figure}

\section{Additional Figures and Tables}\label{app:additional-plots}

\begin{figure}[!h]
    \begin{center}
    \centering
    \includegraphics[width=0.3\textwidth]{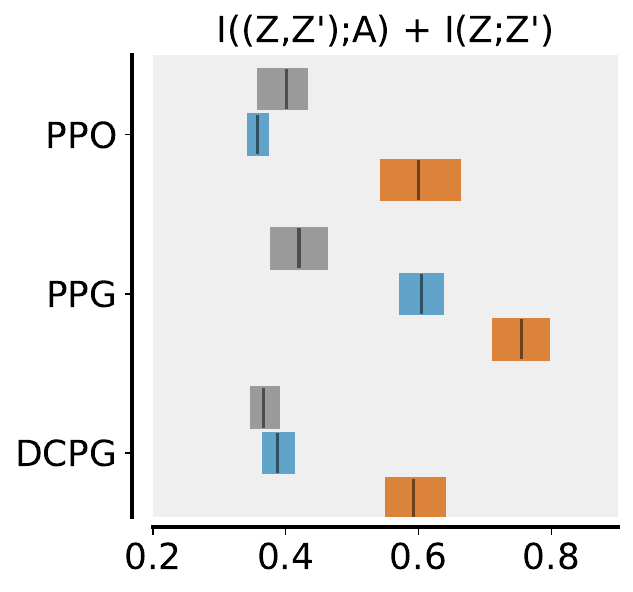}
    \end{center}
    \caption{$I((Z,Z^{\prime});A) + I(Z;Z^{\prime})$ for shared (gray), actor (blue) and critic (orange) for PPO, PPG, and DCPG in Procgen.}
    \label{fig:mi_sum_markov}
\end{figure}

\begin{figure}[!h]
    \begin{center}
    \centering
    \includegraphics[width=0.97\textwidth]{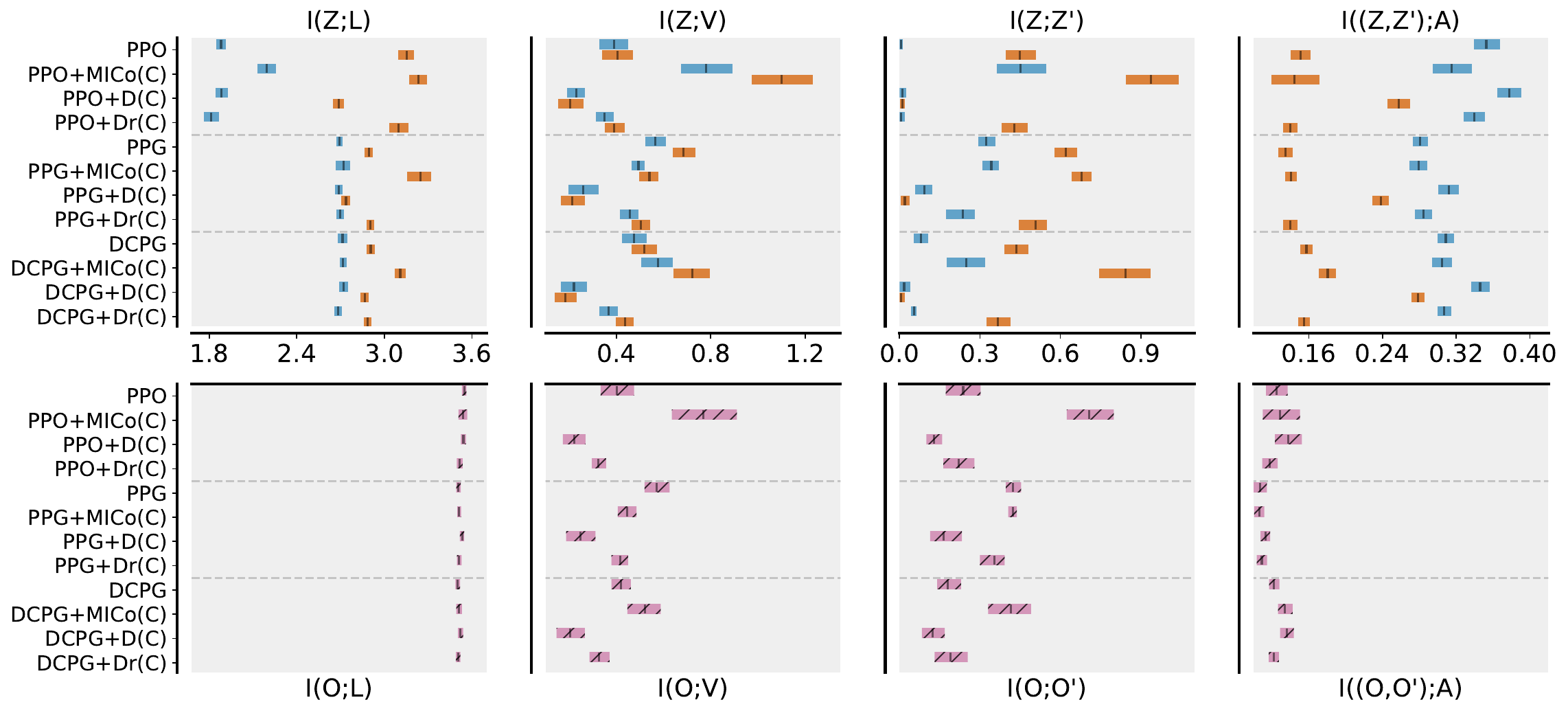}
    \end{center}
    \caption{Mutual information measurements for the actor (blue) and critic (orange) for auxiliary losses applied to the critic for PPO, PPG, and DCPG in Procgen. Top/bottom rows are $I(Z;\cdot)$/$I(O;\cdot)$ with a shared x-axis.}
    \label{fig:critic_objs}
\end{figure}

\begin{figure}[!h]
    \begin{center}
    \centering
    \includegraphics[width=0.97\textwidth]{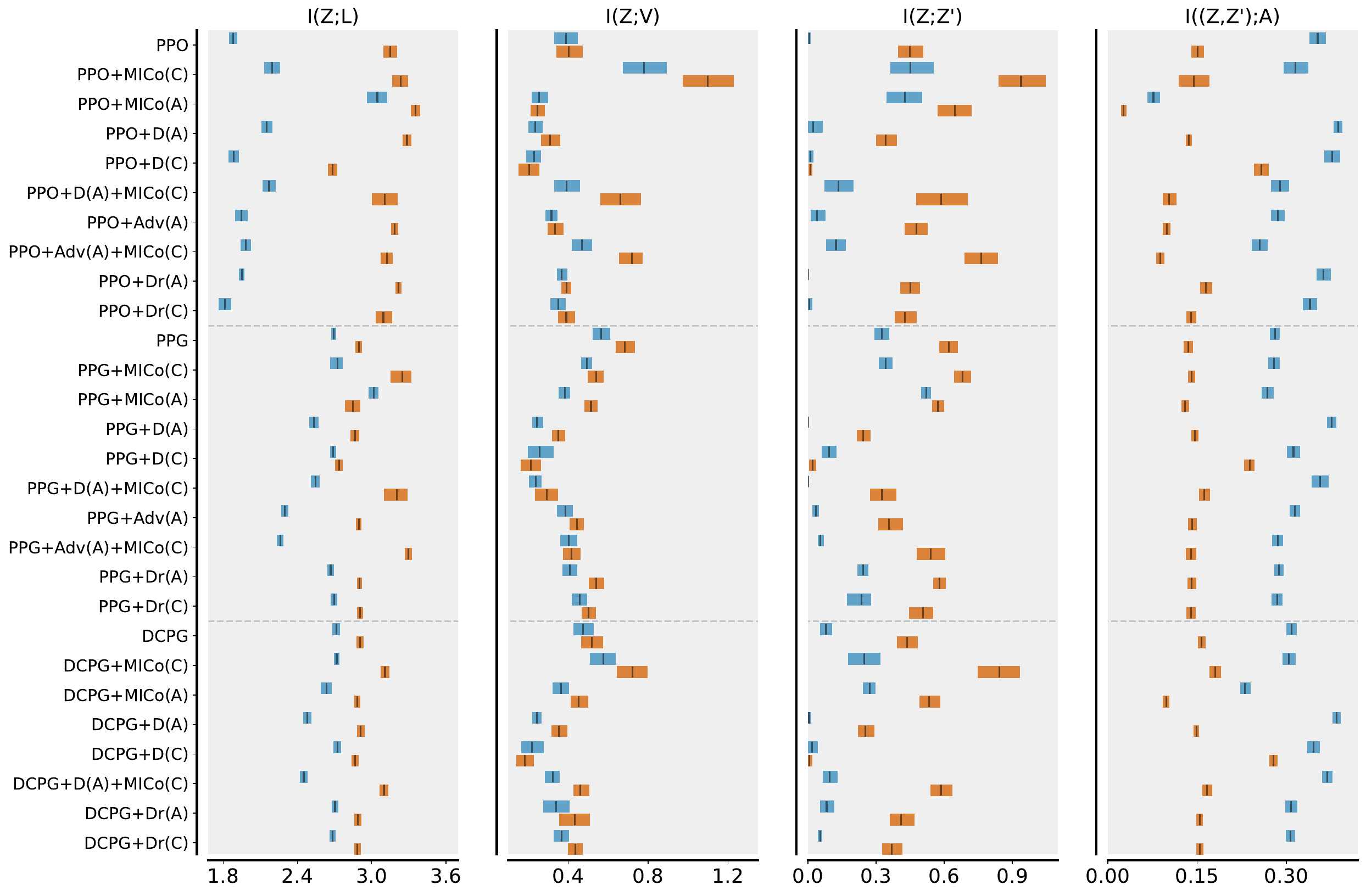}
    \end{center}
    \caption{$I(Z;\cdot)$ measurements for the actor (blue) and critic (orange) for auxiliary losses for PPO, PPG, and DCPG in Procgen.}
    \label{fig:procgen_all_miZ}
\end{figure}

\begin{figure}[!h]
    \begin{center}
    \centering
    \includegraphics[width=0.97\textwidth]{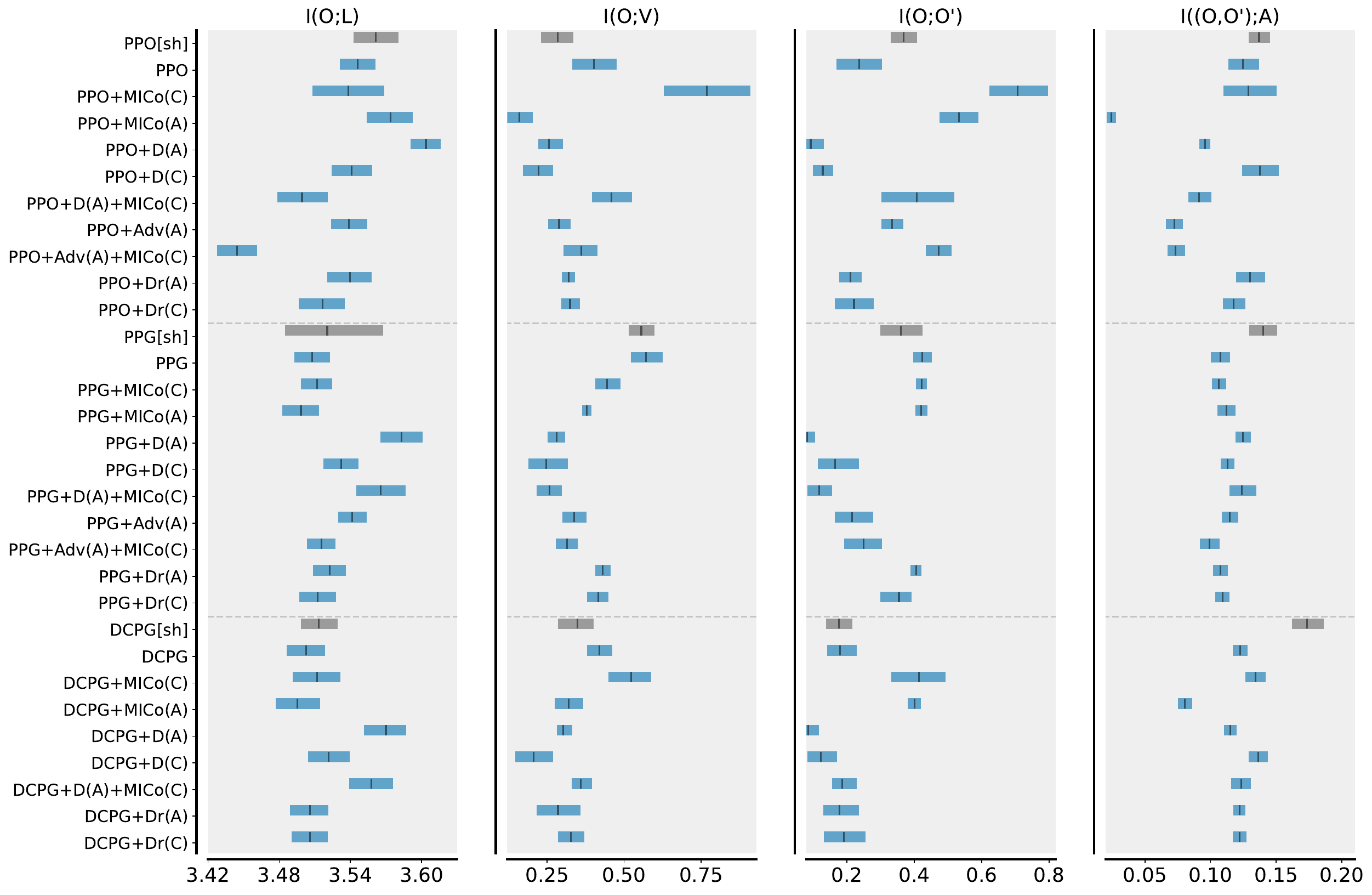}
    \end{center}
    \caption{$I(O;\cdot)$ measurements for the actor (blue) and critic (orange) for auxiliary losses for PPO, PPG, and DCPG in Procgen.}
    \label{fig:procgen_all_miO}
\end{figure}

\begin{figure}[!h]
    \begin{center}
    \centering
    \includegraphics[width=0.97\textwidth]{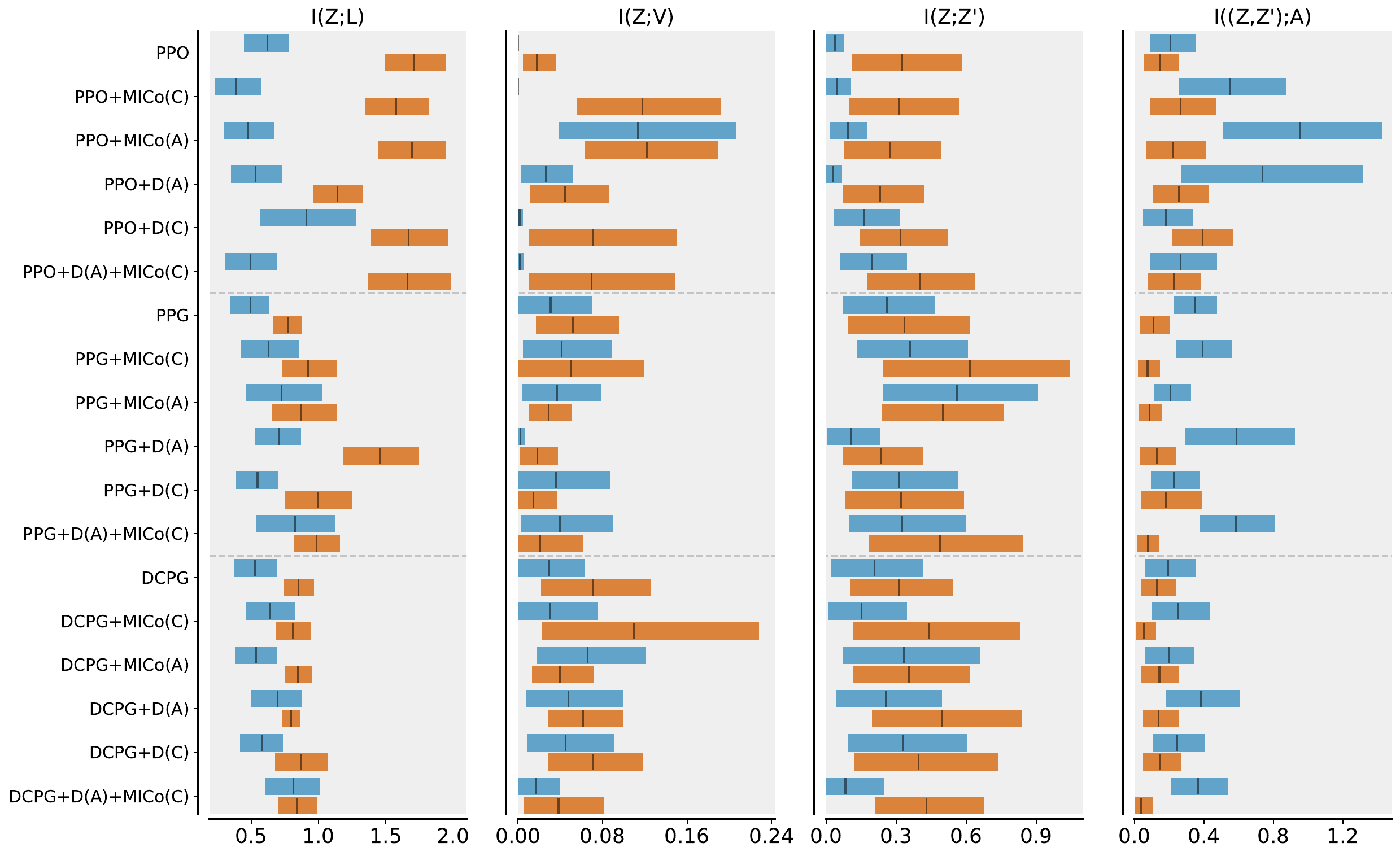}
    \end{center}
    \caption{Mutual information measurements for the actor (blue) and critic (orange) for auxiliary losses for PPO, PPG, and DCPG in Brax.}
    \label{fig:brax_all_mi}
\end{figure}

\begin{figure}[!h]
    \begin{center}
    \centering
    \includegraphics[width=0.97\textwidth]{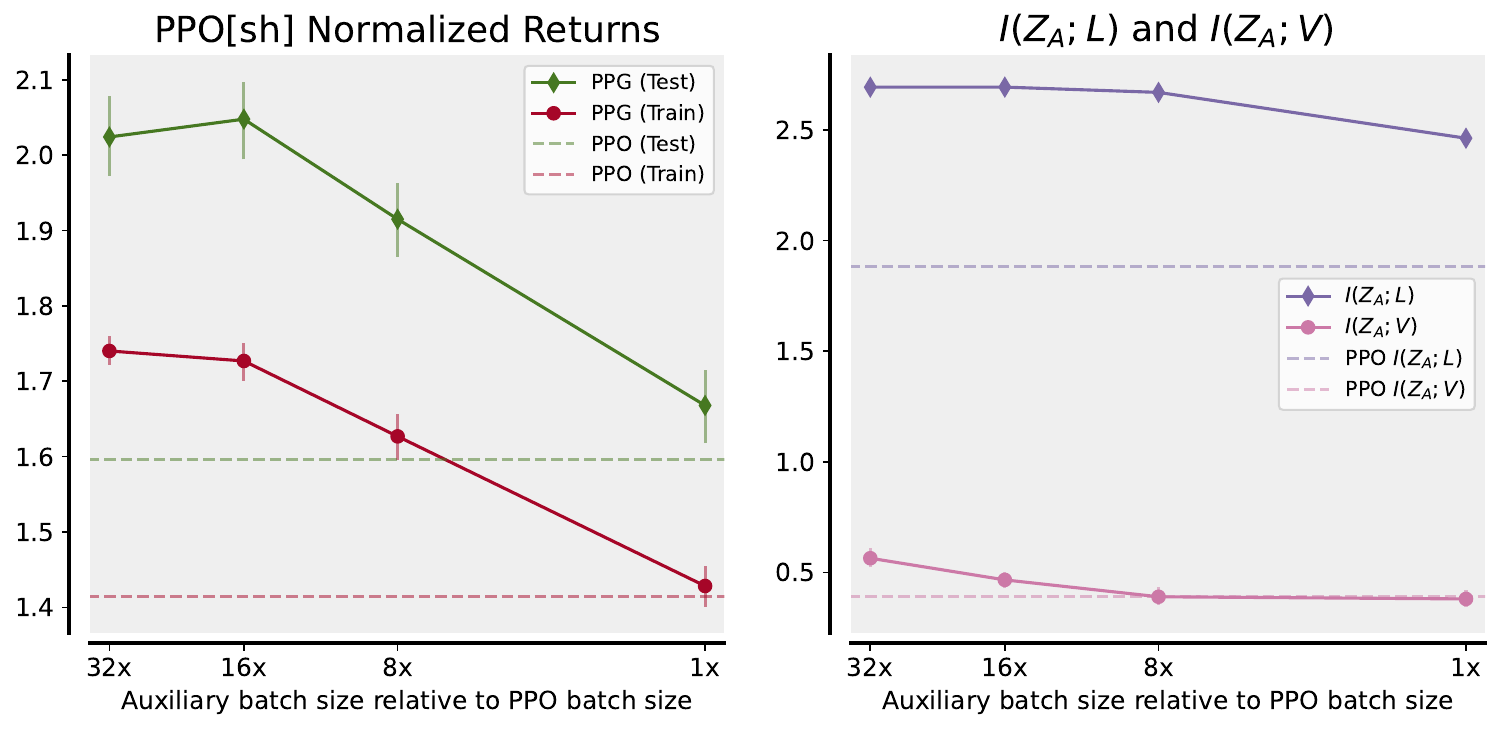}
    \end{center}
    \caption{Procgen PPG returns (left) normalized by PPO[sh] performance and mutual information quantities $I(Z_A;L)$/$I(Z_A;V)$ (right) for varying auxiliary batch size levels.}
    \label{fig:PPGvalue_dist}
\end{figure}

\begin{figure}[!h]
    \begin{center}
    \centering
    \begin{subfigure}[]{0.48\linewidth}
        \includegraphics[width=1\linewidth]{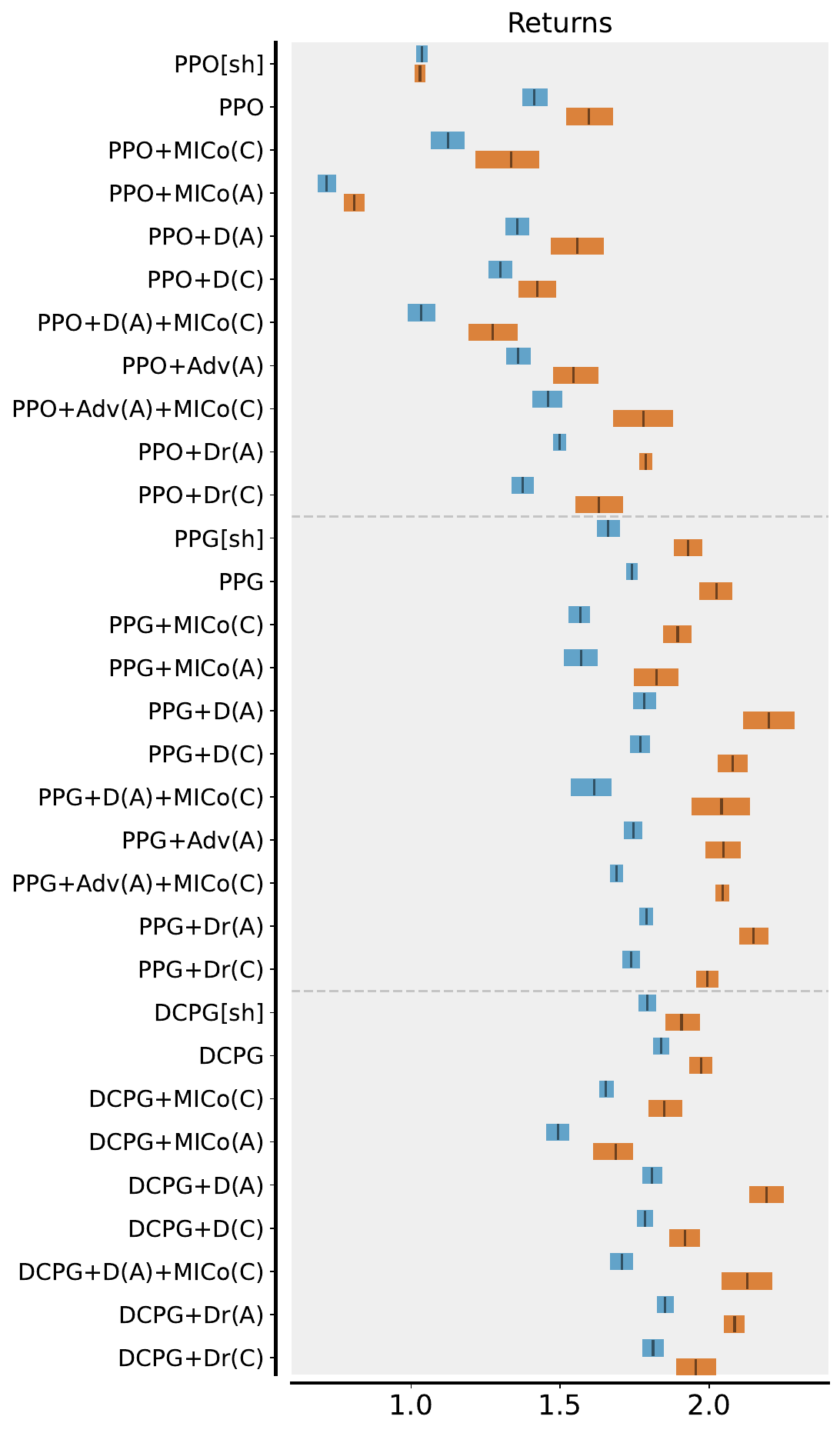}
    \end{subfigure}
    \begin{subfigure}[]{0.48\linewidth}
        \includegraphics[width=1\linewidth]{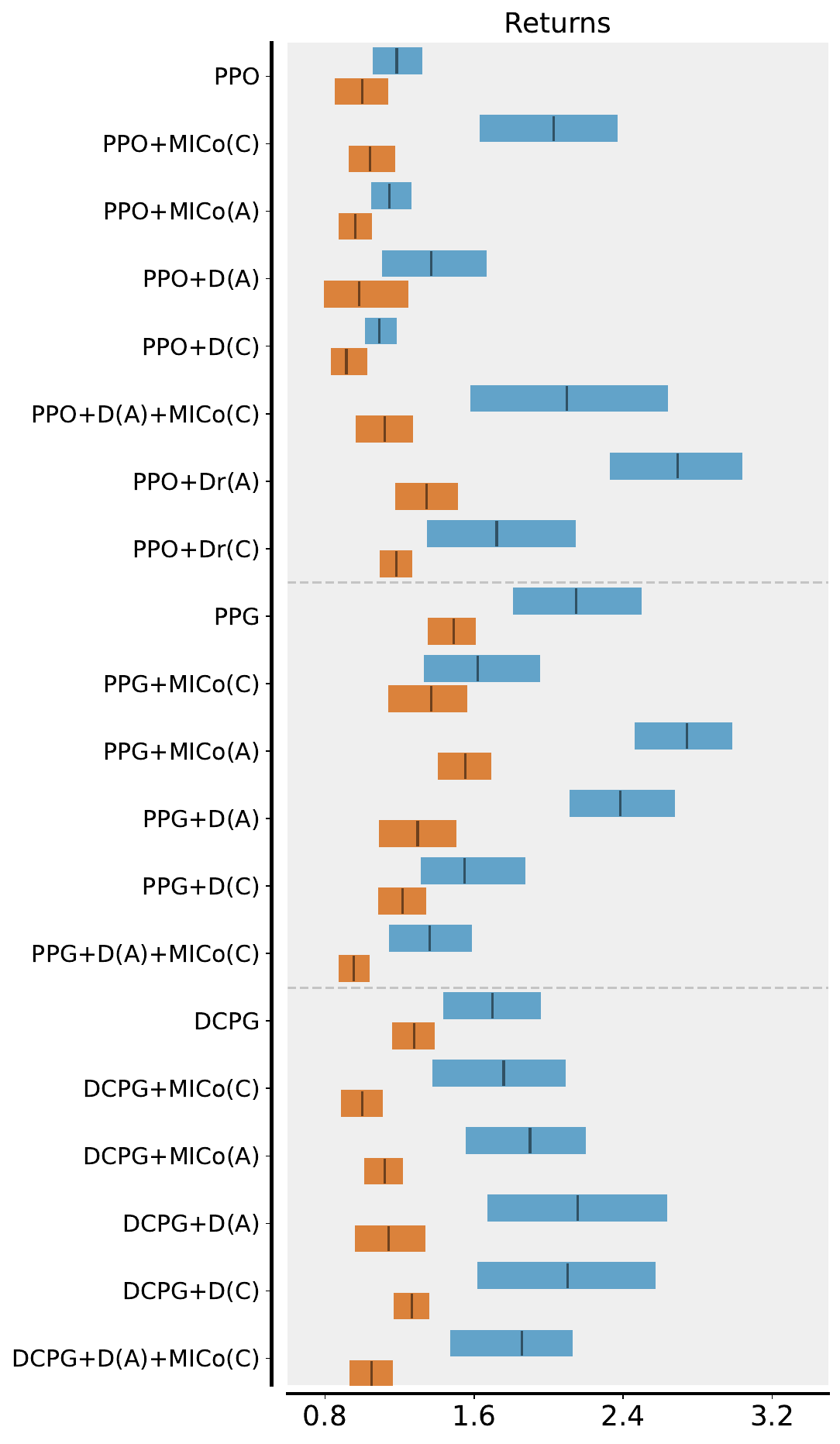}
    \end{subfigure}
    \end{center}
    \caption{Returns in Procgen (left) and Brax (right).}
\label{fig:complete-returns}
\end{figure}

\begin{table}[!h]
\caption{Measurements of compression efficiency $\comp{Z_A \vert O}{V}$ (\Cref{eqn:compression-eff}) with standard error in Procgen. Statistical significance bolded, determined by Welch's t-test. Results highlighted in red when decoupling decreases $\comp{Z_A \vert O}{V}$, and highlighted in green when decoupling increases $\comp{Z_A \vert O}{V}$, otherwise yellow. Coupled architectures are denoted with algorithm name plus ``[sh]''.}
\label{tab:decoupled-C}
\begin{center}
\begin{tabular}{l|c|c}
\textbf{Algorithm} & $C(Z_A \vert O;V)$ & $C(Z_A \vert O;L)$
\\ \hline
PPO[sh] & 89.3 $\pm$ 2 & 65.2 $\pm$ 3 \\
PPO & \colorbox{yellow}{90.1 $\pm$ 4} & \textbf{\colorbox{pink}{52.3 $\pm$ 3}} \\ \hline
PPG[sh] & 85.9 $\pm$ 4 & 70.0 $\pm$ 2 \\
PPG & \colorbox{yellow}{94.1 $\pm$ 2} & \textbf{\colorbox{green}{75.5 $\pm$ 2}} \\ \hline
DCPG[sh] & 95.4 $\pm$ 2 & 77.6 $\pm$ 2 \\
DCPG & \colorbox{yellow}{92.3 $\pm$ 7} & \colorbox{yellow}{76.4 $\pm$ 2}
\end{tabular}
\end{center}
\end{table}

\begin{table}[!h]
\caption{Measurements of compression efficiency $\comp{Z_A \vert O}{\cdot}$ (\Cref{eqn:compression-eff}) of the actor's representation $\phi_A$ in Procgen. Results highlighted in red when the auxiliary loss decreases the metric relative to the base algorithm, and highlighted in green when the auxiliary loss increases the metric relative to the base algorithm. Auxiliary losses are applied to the actor  (A) and critic (C) in the form of dynamics prediction (D), MICo, and advantage distillation (Adv).}
\label{tab:actor-C}
\begin{center}
\begin{tabular}{l|c|c|c}
\textbf{Algorithm} & $C(Z_A \vert O;V)$ & $C(Z_A \vert O;L)$ & $C((Z_A|O,Z^{\prime}_A|O^{\prime});A)$
\\ \hline
PPO & 90.1 $\pm$ 4 & 52.3 $\pm$ 3 & 99.9 $\pm$ 0 \\
PPO+MICo(C) & \colorbox{yellow}{93.9 $\pm$ 2 } & \textbf{\colorbox{green}{60.4 $\pm$ 3 }} & \colorbox{yellow}{99.4 $\pm$ 0 } \\ 
PPO+MICo(A) & \colorbox{yellow}{98.6 $\pm$ 1 } & \textbf{\colorbox{green}{84.7 $\pm$ 2 }} & \textbf{\colorbox{pink}{87.5 $\pm$ 5 }} \\ 
PPO+D(A) & \colorbox{yellow}{62.6 $\pm$ 12 } & \textbf{\colorbox{green}{61.3 $\pm$ 2 }} & \colorbox{yellow}{100.0 $\pm$ 0 } \\ 
PPO+D(C) & \colorbox{yellow}{86.8 $\pm$ 7 } & \colorbox{yellow}{52.8 $\pm$ 3 } & \colorbox{yellow}{100.0 $\pm$ 0 } \\ 
PPO+D(A)+MICo(C) & \colorbox{yellow}{76.3 $\pm$ 6 } & \textbf{\colorbox{green}{63.7 $\pm$ 3 }} & \colorbox{yellow}{99.5 $\pm$ 0 } \\ 
PPO+Adv(A) & \colorbox{yellow}{96.9 $\pm$ 2 } & \colorbox{yellow}{53.2 $\pm$ 3 } & \colorbox{yellow}{100.0 $\pm$ 0 } \\ 
PPO+Adv(A)+MICo(C) & \textbf{\colorbox{green}{100.0 $\pm$ 0 }} & \colorbox{yellow}{57.0 $\pm$ 2 } & \colorbox{yellow}{98.5 $\pm$ 1 } \\ 
PPO+Dr(A) & \colorbox{yellow}{89.1 $\pm$ 6 } & \colorbox{yellow}{54.3 $\pm$ 3 } & \colorbox{yellow}{100.0 $\pm$ 0 } \\ 
PPO+Dr(C) & \colorbox{yellow}{98.1 $\pm$ 1 } & \colorbox{yellow}{50.8 $\pm$ 3 } & \colorbox{yellow}{99.6 $\pm$ 0 } \\ \hline
PPG & 94.1 $\pm$ 2 & 75.5 $\pm$ 2 & 100.0 $\pm$ 0 \\
PPG+MICo(C) & \colorbox{yellow}{98.0 $\pm$ 1 } & \colorbox{yellow}{76.4 $\pm$ 2 } & \colorbox{yellow}{100.0 $\pm$ 0 } \\ 
PPG+MICo(A) & \colorbox{yellow}{95.4 $\pm$ 2 } & \textbf{\colorbox{green}{85.8 $\pm$ 2 }} & \colorbox{yellow}{100.0 $\pm$ 0 } \\ 
PPG+D(A) & \colorbox{yellow}{89.5 $\pm$ 4 } & \colorbox{yellow}{71.2 $\pm$ 2 } & \colorbox{yellow}{100.0 $\pm$ 0 } \\ 
PPG+D(C) & \colorbox{yellow}{96.3 $\pm$ 2 } & \colorbox{yellow}{75.1 $\pm$ 2 } & \colorbox{yellow}{100.0 $\pm$ 0 } \\ 
PPG+D(A)+MICo(C) & \colorbox{yellow}{85.9 $\pm$ 7 } & \colorbox{yellow}{71.6 $\pm$ 2 } & \colorbox{yellow}{100.0 $\pm$ 0 } \\ 
PPG+Adv(A) & \colorbox{yellow}{98.4 $\pm$ 1 } & \textbf{\colorbox{pink}{63.3 $\pm$ 2 }} & \colorbox{yellow}{100.0 $\pm$ 0 } \\ 
PPG+Adv(A)+MICo(C) & \colorbox{yellow}{99.4 $\pm$ 1 } & \textbf{\colorbox{pink}{62.9 $\pm$ 2 }} & \colorbox{yellow}{100.0 $\pm$ 0 } \\ 
PPG+Dr(A) & \colorbox{yellow}{91.3 $\pm$ 3 } & \colorbox{yellow}{74.9 $\pm$ 2 } & \colorbox{yellow}{100.0 $\pm$ 0 } \\ 
PPG+Dr(C) & \colorbox{yellow}{93.1 $\pm$ 6 } & \colorbox{yellow}{75.6 $\pm$ 2 } & \colorbox{yellow}{100.0 $\pm$ 0 } \\ \hline
DCPG & 92.3 $\pm$ 7 & 76.4 $\pm$ 2 & 100.0 $\pm$ 0 \\
DCPG+MICo(C) & \colorbox{yellow}{98.1 $\pm$ 1 } & \colorbox{yellow}{76.6 $\pm$ 2 } & \colorbox{yellow}{100.0 $\pm$ 0 } \\ 
DCPG+MICo(A) & \colorbox{yellow}{91.7 $\pm$ 3 } & \colorbox{yellow}{74.3 $\pm$ 2 } & \colorbox{yellow}{100.0 $\pm$ 0 } \\ 
DCPG+D(A) & \colorbox{yellow}{80.9 $\pm$ 4 } & \textbf{\colorbox{pink}{69.9 $\pm$ 2 }} & \colorbox{yellow}{100.0 $\pm$ 0 } \\ 
DCPG+D(C) & \colorbox{yellow}{97.8 $\pm$ 1 } & \colorbox{yellow}{76.3 $\pm$ 2 } & \colorbox{yellow}{100.0 $\pm$ 0 } \\ 
DCPG+D(A)+MICo(C) & \colorbox{yellow}{83.4 $\pm$ 5 } & \textbf{\colorbox{pink}{69.6 $\pm$ 2 }} & \colorbox{yellow}{100.0 $\pm$ 0 } \\ 
DCPG+Dr(A) & \colorbox{yellow}{97.5 $\pm$ 2 } & \colorbox{yellow}{76.7 $\pm$ 2 } & \colorbox{yellow}{100.0 $\pm$ 0 } \\ 
\end{tabular}
\end{center}
\end{table}

\begin{table}[!h]
\caption{Measurements of compression efficiency $\comp{Z_C \vert O}{\cdot}$ (\Cref{eqn:compression-eff}) of the actor's representation $\phi_C$ in Procgen. Results highlighted in red when the auxiliary loss decreases the metric relative to the base algorithm, highlighted in green when the auxiliary loss increases the metric relative to the base algorithm, and highlighted in yellow otherwise. Auxiliary losses are applied to the actor  (A) and critic (C) in the form of dynamics prediction (D), MICo, and advantage distillation (Adv).}
\label{tab:critic-C}
\begin{center}
\begin{tabular}{l|c|c|c}
\textbf{Algorithm} & $C(Z_C \vert O;V)$ & $C(Z_C \vert O;L)$ & $C((Z_C|O,Z^{\prime}_C | O^{\prime};A))$
\\ \hline
PPO & 93.7 $\pm$ 3 & 88.4 $\pm$ 2 & 85.6 $\pm$ 4 \\
PPO+MICo(C) & \colorbox{yellow}{100.0 $\pm$ 0 } & \colorbox{yellow}{90.3 $\pm$ 1 } & \colorbox{yellow}{82.7 $\pm$ 3 } \\ 
PPO+MICo(A) & \colorbox{yellow}{97.6 $\pm$ 2 } & \colorbox{yellow}{92.4 $\pm$ 1 } & \textbf{\colorbox{pink}{64.4 $\pm$ 6 }} \\ 
PPO+D(A) & \colorbox{yellow}{99.7 $\pm$ 0 } & \colorbox{yellow}{90.2 $\pm$ 1 } & \colorbox{yellow}{87.4 $\pm$ 3 } \\ 
PPO+D(C) & \colorbox{yellow}{87.6 $\pm$ 4 } & \textbf{\colorbox{pink}{77.4 $\pm$ 2 }} & \textbf{\colorbox{green}{99.1 $\pm$ 0 }} \\ 
PPO+D(A)+MICo(C) & \colorbox{yellow}{91.0 $\pm$ 6 } & \colorbox{yellow}{88.1 $\pm$ 2 } & \colorbox{yellow}{84.4 $\pm$ 3 } \\ 
PPO+Adv(A) & \colorbox{yellow}{96.7 $\pm$ 2 } & \colorbox{yellow}{89.3 $\pm$ 1 } & \colorbox{yellow}{87.6 $\pm$ 4 } \\ 
PPO+Adv(A)+MICo(C) & \colorbox{yellow}{100.0 $\pm$ 0 } & \colorbox{yellow}{89.9 $\pm$ 1 } & \colorbox{yellow}{87.0 $\pm$ 3 } \\ 
PPO+Dr(A) & \colorbox{yellow}{98.0 $\pm$ 1 } & \colorbox{yellow}{90.0 $\pm$ 1 } & \colorbox{yellow}{87.9 $\pm$ 3 } \\ 
PPO+Dr(C) & \colorbox{yellow}{97.5 $\pm$ 1 } & \colorbox{yellow}{87.0 $\pm$ 2 } & \colorbox{yellow}{86.3 $\pm$ 3 } \\ \hline
PPG & 99.2 $\pm$ 1 & 81.6 $\pm$ 2 & 90.3 $\pm$ 3 \\
PPG+MICo(C) & \colorbox{yellow}{93.0 $\pm$ 6 } & \textbf{\colorbox{green}{90.9 $\pm$ 2 }} & \colorbox{yellow}{91.8 $\pm$ 2 } \\ 
PPG+MICo(A) & \colorbox{yellow}{100.0 $\pm$ 0 } & \colorbox{yellow}{80.9 $\pm$ 2 } & \colorbox{yellow}{84.4 $\pm$ 4 } \\ 
PPG+D(A) & \colorbox{yellow}{100.0 $\pm$ 0 } & \colorbox{yellow}{79.2 $\pm$ 2 } & \colorbox{yellow}{91.3 $\pm$ 3 } \\ 
PPG+D(C) & \textbf{\colorbox{pink}{89.4 $\pm$ 4 }} & \colorbox{yellow}{77.6 $\pm$ 2 } & \textbf{\colorbox{green}{100.0 $\pm$ 0 }} \\ 
PPG+D(A)+MICo(C) & \colorbox{yellow}{93.3 $\pm$ 4 } & \textbf{\colorbox{green}{89.1 $\pm$ 2 }} & \colorbox{yellow}{87.1 $\pm$ 4 } \\ 
PPG+Adv(A) & \colorbox{yellow}{100.0 $\pm$ 0 } & \colorbox{yellow}{80.9 $\pm$ 2 } & \colorbox{yellow}{89.7 $\pm$ 3 } \\ 
PPG+Adv(A)+MICo(C) & \colorbox{yellow}{99.9 $\pm$ 0 } & \colorbox{yellow}{92.3 $\pm$ 1 } & \colorbox{yellow}{93.1 $\pm$ 3 } \\ 
PPG+Dr(A) & \colorbox{yellow}{98.7 $\pm$ 1 } & \colorbox{yellow}{81.7 $\pm$ 2 } & \colorbox{yellow}{90.2 $\pm$ 3 } \\ 
PPG+Dr(C) & \colorbox{yellow}{96.4 $\pm$ 3 } & \colorbox{yellow}{81.8 $\pm$ 2 } & \colorbox{yellow}{85.8 $\pm$ 4 } \\ \hline
DCPG & 99.5 $\pm$ 1 & 81.7 $\pm$ 2 & 92.1 $\pm$ 3 \\
DCPG+MICo(C) & \colorbox{yellow}{98.9 $\pm$ 1 } & \textbf{\colorbox{green}{88.6 $\pm$ 2 }} & \colorbox{yellow}{93.5 $\pm$ 2 } \\ 
DCPG+MICo(A) & \colorbox{yellow}{99.8 $\pm$ 0 } & \colorbox{yellow}{81.4 $\pm$ 2 } & \colorbox{yellow}{87.2 $\pm$ 4 } \\ 
DCPG+D(A) & \colorbox{yellow}{100.0 $\pm$ 0 } & \colorbox{yellow}{80.7 $\pm$ 2 } & \colorbox{yellow}{92.6 $\pm$ 3 } \\ 
DCPG+D(C) & \textbf{\colorbox{pink}{91.3 $\pm$ 3 }} & \colorbox{yellow}{81.2 $\pm$ 1 } & \textbf{\colorbox{green}{100.0 $\pm$ 0 }} \\ 
DCPG+D(A)+MICo(C) & \colorbox{yellow}{99.6 $\pm$ 0 } & \textbf{\colorbox{green}{87.4 $\pm$ 2 }} & \colorbox{yellow}{92.7 $\pm$ 3 } \\ 
DCPG+Dr(A) & \colorbox{yellow}{100.0 $\pm$ 0 } & \colorbox{yellow}{81.3 $\pm$ 2 } & \colorbox{yellow}{89.4 $\pm$ 3 } \\ 
DCPG+Dr(C) & \colorbox{yellow}{97.0 $\pm$ 2 } & \colorbox{yellow}{81.2 $\pm$ 2 } & \colorbox{yellow}{90.7 $\pm$ 3 } \\ 
\end{tabular}
\end{center}
\end{table}

\section{Implementation details}

\subsection{Mutual Information Estimation}
We measure mutual information using the estimator proposed by \citet{kraskov2004estimating} and later extended to pairings of continuous and discrete variables by \citet{ross2014mutual}. These methods are based on performing entropy estimation using k-nearest neighbors distances. We use $k=3$ and determine nearest neighbors by measuring the Euclidian ($L_2$) distance between points. We checked measurements obtained when using different $k$ and under different metric spaces, and we found that our measurements are broadly invariant to the choice of estimator parameters.

At the end of training we collect a batch of trajectories consisting of $2^{16}$ timesteps ($2^{15}$ timesteps in Brax) from $L$. We construct $( A,O,O',Z,Z',V, L )$ from $n=4096$ timesteps yielding $( a_t, o_t, o_{t+1}, z_t, z_{t+1}, v_t, c_t)$. Subsampling is necessary to compute mutual information estimates in a reasonable time, while ensuring we sample states from most levels in $L$ and at various point of the trajectories followed by the agent in each level. Timesteps are sampled uniformly and without replacement from the batch, after having excluded:
\begin{enumerate}
    \item Odd timesteps, to ensure $O$ and $O'$ will not overlap (i.e. $O$ contains only even timesteps, and $O'$, being sampled at $t+1$, contains only odd timesteps).
    \item Timesteps corresponding to episode terminations, to ensure the pair $o_t,o_t+1$ cannot originate from different levels.
    \item Timesteps from episodes that have not terminated, to ensure we can always compute $v_t$.
\end{enumerate}

\subsection{Procgen}\label{app:results_procgen}
The Procgen Benchmark is a set of 16 diverse PCG environments that echoes the gameplay variety seen in the ALE benchmark \cite{ALE}. The game levels, determined by a random seed, can differ in visual design, navigational structure, and the starting locations of entities. All Procgen environments use a common discrete 15-dimensional action space and generate $64 \times 64 \times 3$ RGB observations. A detailed description of each of the 16 environments is provided by \citet{procgen}. RL algorithms such as PPO reveal significant differences between test and training performance in all games, making Procgen a valuable tool for evaluating generalisation performance. 

We conduct our experiment on the easy setting of Procgen, which employs 200 training levels and a budget of 25M training steps, and evaluate the agent's scores on the training levels and on the full range of levels, excluding the training levels. We use the version of Procgen provided by EnvPool \citep{weng2022envpool}. Following prior work, \citep{raileanu2021UCB-DrAC,PLR,DCPG}, for each game we normalise train$/$test scores by the mean train$/$test score achieved by PPO in that game. 

For PPO, we base our implementation on the CleanRL PPO implementation \citep{huang2022cleanrl}, which reimplements the PPO agent from the original Procgen publication in JAX. We use the same ResNet policy architecture and PPO hyperparameters (identical for all games) as \citet{procgen} and reported in \Cref{table:hyperparams_PPO}.

We re-implement PPG and DCPG in JAX, based on the Pytorch implementations provided by \citet{huang2022cleanrl} and \citet{DCPG}. We use the default recommended hyperparameters for each algorithms, which are reported in \Cref{table:hyperparams_PPG}. We note that our PPG implementation ends up outperforming the original implementation by about 10\% on the test set, while our DCPG implementation underperforms test scores reported by \citet{DCPG} by about 10\%. %
We attribute this discrepancy to minor differences between the JAX and Pytorch libraries, and decided to not investigate further. 

We conduct our experiments on A100 and RTX8000 Nvidia GPUs and 6 CPU cores. One seed for one game completes in 2 to 12 hours, depending on the GPU, algorithm, and whether the architecture is coupled or decoupled (for example, PPG decoupled can be expected to run 4x to 6x slower than PPO coupled).

\subsection{Brax}
For our experiments in Brax, we implement a custom ``video distractors'' set of tasks, similar to those from~\citep{stone2021distracting}. In this setup, a video plays in an overlay on the pixels the agent views. There is a disjoint set of videos between the training and testing environments. The random seed determines the environment's initial physics and the video overlay at the beginning of training. The pixels themselves are full-RGB $64\times64\times3$ arrays, but we use framestacking to bring each agent input to $64\times64\times9$ pixels.

Similar to the algorithms used in the Procgen experiments, we implement our algorithms in JAX and base them on ClearnRL.

We conduct our experiments on RTX A4500 Nvidia GPUs and 6 CPU cores. One seed completes in 7.5-48 hours, depending on the environment and its physics backend as well as the algorithm.

\begin{table}[!htb]
\caption{Hyperparameters used for PPO in Procgen and Brax experiments. All runs employing a specific (or combination of) representation learning objective use the same hyperparameters.}
\label{table:hyperparams_PPO}
\begin{center}
\scalebox{0.88}{
\begin{tabular}{lcccc}
\toprule
\textbf{Parameter} & Procgen & Brax\\
\midrule
\emph{PPO} & \\
$\gamma$ &0.999 & 0.999 \\
$\lambda_{\text{GAE}}$ & 0.95 & 0.95  \\
rollout length & 256 & 128 \\
minibatches per epoch & 8 & 8 \\
minibatch size & 2048 & 512\\
$J_\pi$ clip range & 0.2 & 0.2 \\
number of environments & 64 & 32 \\
Adam learning rate & 5e-4 & 5e-4 \\
Adam $\epsilon$ & 1e-5 & 1e-8  \\
max gradient norm & 0.5  & 0.5  \\
value clipping & no  & no \\
return normalisation & yes & no \\
value loss coefficient & 0.5 & 0.5 \\
entropy coefficient &  0.01  & 0.01 \\

\addlinespace[10pt]
\emph{PPO (coupled)} & & & \\
PPO epochs (actor and critic) & 3 & - \\

\addlinespace[10pt]
\emph{PPO (decoupled)} & & & \\
Actor epochs & 1 & 1 \\
Critic epochs & 9 & 1 \\

\addlinespace[10pt]
\emph{MICo objective} & & \\
MICo coefficient & 0.5 & 0.01 \\
Target network update coefficient & 0.005 & 0.05 \\

\addlinespace[10pt]
\emph{Dynamics objective} & & & \\
Dynamics loss coefficient & 1.0 & 0.01 \\
In-distribution transitions weighting & 1.0 & 1.0 \\
Out-of-distribution states weighting & 1.0 & 1.0 \\
Out-of-distribution actions weighting & 0.5 & 0.5 \\

\addlinespace[10pt]
\emph{Advantage distillation objective} & & & \\
Advantage prediction coefficient & 0.25 & - \\

\bottomrule 
\end{tabular}
}
\end{center}
\end{table}

\begin{table}[!htb]
\caption{Hyperparameters used for PPG and DCPG in Procgen experiments. Hyperparameters shared between methods are only reported if they change from the method above. All runs employing a specific (or combination of) representation learning objective use the same hyperparameters.} 
\label{table:hyperparams_PPG}
\begin{center}
\scalebox{0.88}{
\begin{tabular}{lccc}
\toprule
\textbf{Parameter} & Procgen\\
\midrule
\emph{PPG} & \\
$\gamma$ &0.999  \\
$\lambda_{\text{GAE}}$ & 0.95 \\
rollout length & 256  \\ 
minibatches per epoch policy phase & 8  \\
minibatch size policy phase & 2048\\
minibatches per epoch auxiliary phase & 512  \\
minibatch size auxiliary phase & 1024\\
$J_\pi$ clip range & 0.2  \\
number of environments & 64  \\
Adam learning rate & 5e-4  \\
Adam $\epsilon$ & 1e-5  \\
max gradient norm & 0.5  \\
value clipping & no \\
return normalisation & yes \\
value loss coefficient policy phase & 0.5\\
value loss coefficient auxiliary phase & 1.0\\
entropy coefficient &  0.01  \\
policy phase epochs & 1 \\
auxiliary phase epochs & 6 \\
number of policy phases per auxiliary phase & 32 \\
policy regularisation coefficient $\beta_c$ & 1.0 \\
auxiliary value distillation coefficient & 1.0 \\

\addlinespace[10pt]
\emph{DCPG} &   \\
value loss coefficient policy phase & 0.0\\
delayed value loss coefficient policy phase & 1.0\\

\addlinespace[10pt]
\emph{MICo objective} & \\
MICo coefficient & 0.5 \\
Target network update coefficient & 0.005 \\

\addlinespace[10pt]
\emph{Dynamics objective} & & & \\
Dynamics loss coefficient & 1.0  \\
In-distribution transitions weighting & 1.0  \\
Out-of-distribution states weighting & 1.0  \\
Out-of-distribution actions weighting & 0.5\\

\bottomrule 
\end{tabular}
}
\end{center}
\end{table}

\end{document}